\def\isarxiv{1}
\def\paperTitle{Fundamental Limits of Crystalline Equivariant Graph Neural Networks: A Circuit Complexity Perspective}
\def\paperAuthor{
Yang Cao\thanks{\texttt{ycao4@wyomingseminary.org}. Wyoming Seminary.}
\and
Zhao Song\thanks{\texttt{magic.linuxkde@gmail.com}. University of California, Berkeley.}
\and
Jiahao Zhang\thanks{\texttt{ml.jiahaozhang02@gmail.com}.}
\and
Jiale Zhao\thanks{\texttt{zh2841871831@gmail.com}. Guangdong University of Technology.}
}
\theoremstyle{plain}
\newtheorem{theorem}{Theorem}[section]
\newtheorem{lemma}[theorem]{Lemma}
\newtheorem{definition}[theorem]{Definition}
\newtheorem{fact}[theorem]{Fact}
\newtheorem{remark}[theorem]{Remark}
\renewcommand{\cite}{\citep}
\newcommand{\R}{\mathbb{R}}
\newcommand{\intdiv}{\mathbin{/\!/}}
\newcommand{\float}[2]{\left\langle #1, #2\right\rangle}
\DeclareMathOperator*{\Z}{\mathbb{Z}}
\DeclareMathOperator{\poly}{poly}
\begin{document}

\ifdefined\isarxiv

\date{}
\title{\paperTitle}
\author{\paperAuthor}

\else

\title{\paperTitle}

\author{Antiquus S.~Hippocampus, Natalia Cerebro \& Amelie P. Amygdale \thanks{ Use footnote for providing further information
about author (webpage, alternative address)---\emph{not} for acknowledging
funding agencies.  Funding acknowledgements go at the end of the paper.} \\
Department of Computer Science\\
Cranberry-Lemon University\\
Pittsburgh, PA 15213, USA \\
\texttt{\{hippo,brain,jen\}@cs.cranberry-lemon.edu} \\
\And
Ji Q. Ren \& Yevgeny LeNet \\
Department of Computational Neuroscience \\
University of the Witwatersrand \\
Joburg, South Africa \\
\texttt{\{robot,net\}@wits.ac.za} \\
\AND
Coauthor \\
Affiliation \\
Address \\
\texttt{email}
}

%

\newcommand{\fix}{\marginpar{FIX}}
\newcommand{\new}{\marginpar{NEW}}

\maketitle

\fi

\ifdefined\isarxiv
\begin{titlepage}
  \maketitle
  \begin{abstract}
    
Graph neural networks (GNNs) have become a core paradigm for learning on relational data. In materials science, equivariant GNNs (EGNNs) have emerged as a compelling backbone for crystalline-structure prediction, owing to their ability to respect Euclidean symmetries and periodic boundary conditions. Despite strong empirical performance, their expressive power in periodic, symmetry-constrained settings remains poorly understood.
This work characterizes the intrinsic computational and expressive limits of EGNNs for crystalline-structure prediction through a circuit-complexity lens. We analyze the computations carried out by EGNN layers acting on node features, atomic coordinates, and lattice matrices, and prove that, under polynomial precision, embedding width $d=O(n)$ for $n$ nodes, $O(1)$ layers, and $O(1)$-depth, $O(n)$-width MLP instantiations of the message/update/readout maps, these models admit a simulation by a \emph{uniform} $\mathsf{TC}^0$ threshold-circuit family of polynomial size (with an explicit constant-depth bound). Situating EGNNs within $\mathsf{TC}^0$ provides a concrete ceiling on the decision and prediction problems solvable by such architectures under realistic resource constraints and clarifies which architectural modifications (e.g., increased depth, richer geometric primitives, or wider layers) are required to transcend this regime. The analysis complements Weisfeiler-Lehman style results that do not directly transfer to periodic crystals, and offers a complexity-theoretic foundation for symmetry-aware graph learning on crystalline systems.

  \end{abstract}
  \thispagestyle{empty}
\end{titlepage}

{\hypersetup{linkcolor=black}
\tableofcontents
}
\newpage

\else

\begin{abstract}

\end{abstract}

\fi



\section{Introduction}

Graphs are a natural language for relational data, capturing entities and their interactions in domains ranging from molecules and materials~\cite{mbs+23} to social~\cite{slys21} and recommendation networks~\cite{yhc18}. Graph neural networks (GNNs) have consequently become a standard tool for learning on such data: the message-passing paradigm aggregates information over local neighborhoods to produce expressive node and graph representations that power tasks such as node/edge prediction and graph classification. This message-passing template (i.e., graph convolution followed by nonlinear updates) underlies many successful architectures and applications~\cite{jep+21,bdp+23}.

Recently, \emph{equivariant} graph neural networks (EGNNs)~\cite{shw21} have emerged as a promising direction for modeling crystalline structures in materials science. By respecting Euclidean symmetries and periodic boundary conditions, EGNNs encode physically meaningful inductive biases, enabling accurate predictions of structures, energies, and related materials properties directly from atomic coordinates and lattice parameters~\cite{shw+22,mbs+23}. In practice, E(3)/E($n$)-equivariant message passing and related architectures achieve strong performance while avoiding some of the computational burdens of higher-order spherical-harmonics pipelines~\cite{tsk+18,ls22}, and they have been adapted to periodic crystals~\cite{jhl+23,hdv+23}. Moreover, EGNN-style backbones are now widely used within crystalline generative models, including diffusion/flow-based approaches that model positions, lattices, and atom types jointly~\cite{jhl+23,ycm+23,zpz+23}. 
Despite this progress, fundamental questions about \emph{expressive power} remain. In particular, we ask:
\begin{quote}
\emph{What are the intrinsic computational and expressive limits of EGNNs for crystalline-structure prediction?}
\end{quote}

Prior theory for (non-equivariant) message-passing GNNs analyzes expressiveness through the lens of the Weisfeiler–Lehman (WL) hierarchy~\cite{xhl18,mrf19,mrm20}, establishing that standard GNNs are at most as powerful as 1-WL and exploring routes beyond via higher-order or subgraph-based designs~\cite{mrf19,mbh19,cmr21,qrg22}; other lines study neural models via circuit-complexity bounds. However, WL-style results focus on discrete graph isomorphism and typically abstract away continuous coordinates and symmetry constraints, while most existing circuit-complexity analyses target different architectures (e.g., Transformers~\cite{llzm24,cll+25_rope}). These differences make such results ill-suited to crystalline settings, where periodic lattices, continuous 3D coordinates, and E($n$)-equivariance are first-class modeling constraints. This motivates a tailored treatment of EGNNs for crystals.

In this paper, we investigate the \emph{fundamental expressive limits of EGNNs in crystalline-structure prediction}~\cite{ks22,jhl+23,mcsw24}. Rather than comparing against WL tests, we follow a circuit-complexity route~\cite{chi24,l25}: we characterize the computations performed by EGNN layers acting on node features, atomic coordinates, and lattice matrices, and we quantify the resources required to simulate these computations with uniform threshold circuits. Placing EGNNs within a concrete circuit class yields immediate implications for the families of decision or prediction problems such models can (and provably cannot) solve under realistic architectural and precision constraints. This perspective complements WL-style analyses and is naturally aligned with architectures, such as EGNNs, that couple graph structure with continuous, symmetry-aware geometric features.

{\bf Contributions.} Our contributions are summarized as follows:
\begin{itemize}
\item {\bf Formalizing EGNNs' structure.} We formalize the definition of EGNNs (Definition~\ref{def:egnn}).
\item {\bf Circuit-complexity upper bound for EGNNs.} Under polynomial precision, embedding width $d=O(n)$, $O(1)$ layers, and $O(n)$-width $O(1)$-depth MLP instantiations of the message/update/readout maps, we prove that the EGNN class from Definition~\ref{def:egnn} can be implemented using a \emph{uniform} $\mathsf{TC}^0$ circuit family (Theorem~\ref{sec:main_result}).
\end{itemize}

{\bf Roadmap.} 
In Section~\ref{sec:related_works}, we summarize the related works. 
In Section~\ref{sec:prelim}, we present the basic concepts and notations. 
In Section~\ref{sec:circuit}, we analyze the circuit complexity of components.
In Section~\ref{sec:main_result}, we present our main results.
Finally, in Section~\ref{sec:conclusion}, we conclude our work.
\section{Related Work}\label{sec:related_works}

\textbf{CSP and DNG in Materials Discovery}
Early methods for CSP and DNG approached materials discovery by generating a large pool of candidate structures and then screening them with high-throughput quantum mechanical calculations \cite{ks65} to estimate stability. Candidates were typically constructed through simple substitution rules \cite{wbm21} or explored with genetic algorithms \cite{goh06, pn11}.
Later, machine learning models were introduced to accelerate this process by predicting energies directly \cite{shw+22, mbs+23}.

To avoid brute-force search, generative approaches have been proposed to directly design materials \cite{cyjc20, ysd+21, nsc18}. Among them, diffusion models have gained particular attention, initially focusing on atomic positions while predicting the lattice with a variational autoencoder \cite{xfg+21}, and more recently modeling positions, lattices, and atom types jointly \cite{jhl+23, ycm+23, zpz+23}. Other recent advances incorporate symmetry information such as space groups \cite{hdv+23, jhl+24, cllw24}, leverage large language models \cite{fa23, gsm+24}, or employ normalizing flows \cite{wpi+22}.

\textbf{Flow Matching for Crystalline Structures}
Flow Matching \cite{lcb+23, tmh+23, dpnt23} has recently established itself as a powerful paradigm for generative modeling, showing remarkable progress across multiple areas. The initial motivation came from addressing the heavy computational cost of Continuous Normalizing Flows (CNFs) \cite{crbd18}, as earlier methods often relied on inefficient simulation strategies \cite{rgnl21, bcba+22}. This challenge inspired a new class of Flow Matching techniques \cite{av22, tfm+23, hbc23}, which learn continuous flows directly without resorting to simulation, thereby achieving much better flexibility. Recent study including \cite{ccl+25_form,cgl+25_homo,lss+25_hofar,lss+25_nrflow} explore Flow Matching in higher orders. Thanks to its straightforward formulation and strong empirical performance, Flow Matching has been widely adopted in large-scale generation tasks. For instance, \cite{dsf23} proposes a latent flow matching approach for video prediction that achieves strong results with far less computation. \cite{csy25} introduce a video generating method that use Flow Matching to learn the interpolation on the latent space. \cite{zlf+25} applies consistency flow matching to robotic manipulation, enabling efficient and fast policy generation. \cite{jbj24} develops a flow-based generative model for protein structures that improves conformational diversity and flexibility while retaining high accuracy. \cite{lww+24} introduces CrystalFlow, a flow-based model for efficient crystal structure generation. Overall, Flow Matching has proven to be an efficient tool for generative modeling across diverse modalities. \textit{Notably, EGNN-style backbones have become a de facto choice for crystalline structure generative modeling: diffusion- and flow-based pipelines pair symmetry-aware message passing with periodic boundary handling to jointly model positions, lattices, and compositions}~\cite{jhl+23, ycm+23, zpz+23, lww+24}. In these systems, the equivariant message-passing core supplies an inductive bias that improves sample validity and stability while reducing reliance on higher-order tensor features~\cite{shw21, hdv+23, jhl+24}.

{\bf Geometric Deep Learning.} 
Geometric deep learning, particularly geometrically equivariant Graph Neural Networks (GNNs) that ensure E(3) symmetry, has achieved notable success in chemistry, biology, and physics \cite{jep+21, bmb+21, bdp+23, mbs+23, qcw+23, zwh+25}. In particular, equivariant GNNs have demonstrated superior performance in modeling 3D structures \cite{cdg+21, tls+23}. Existing geometric deep learning approaches can be broadly categorized into four types: (1) Invariant methods, which extract features stable under transformations, such as pairwise distances and torsion angles \cite{ssk+18, ggmg20, gbg21}; (2) Spherical harmonics-based models, which leverage irreducible representations to process data equivariantly \cite{tsk+18, ls22}; (3) Branch-encoding methods, encoding coordinates and node features separately and interacting through coordinate norms \cite{jes+20, shw21}; (4) Frame averaging frameworks, which model coordinates in multiple PCA-derived frames and achieve equivariance by averaging the representations \cite{pab+21, dsh+23}.

While these architectures have pushed the boundaries of modeling geometric data in 3D structures, and advanced equivariant and invariant neural architectures in learning geometric data in chemistry, biology, and physics domains, the fundamental limitations of such architectures in crystalline structures still remain less explored. In this paper, we reveal the fundamental expressive capability limitation of equivariant GNNs via the lens of circuit complexity.

{\bf Circuit Complexity and Machine Learning.} 
Circuit complexity is a fundamental notion in theoretical computer science, providing a hierarchy of Boolean circuits with different gate types and computational resources~\cite{v99,ab09}. This framework has recently been widely used to analyze the expressiveness of machine learning models: a model that can be simulated by a weaker circuit class may fail on tasks requiring stronger classes. A central line of work applies circuit complexity to understand Transformer expressivity. Early studies analyzed two simplified theoretical models of Transformers:  and Average-Head Attention Transformers and SoftMax-Attention Transformers~\cite{lag+22,mss22,ms23}. Subsequent results have extended these analyses to richer Transformer variants, including those with Chain-of-Thought (CoT) reasoning~\cite{fzg+23,llzm24,ms24}, looped architectures~\cite{gps+23,lf24,sdl+25}, and Rotary Position Embeddings (RoPE)~\cite{cll+24_tensor_tc, cll+25_rope,ysw+25,cssz25}. Beyond Transformers, circuit complexity has also been applied to other architectures such as state space models (SSMs)~\cite{cll+25_mamba}, Hopfield networks~\cite{lll+24}, and various vision generative models, such as diffusion models~\cite{gkl+25, ccsz25,cll+25_var,kll+25_var} and autoregressive models~\cite{kll+25_tc}, as well as graph neural networks (GNNs)~\cite{g23,cgws24,lls+25}. In this work, we study the circuit complexity bounds of equivariant GNNs on crystalline structures, providing the first analysis of this kind.

{\bf Fundamental Limits of Neural Networks.}
A growing body of theoretical work seeks to describe the inherent limitations of neural networks, particularly Transformer-based architectures, in terms of their expressivity, statistical efficiency, and learnability. In the context of expressivity, recent studies establish the universal approximation abilities of various architectures, including prompt tuning Transformers~\cite{hwg+24}, attention mechanisms viewed as max-affine partitions~\cite{lhsl25}, and visual autoregressive Transformers~\cite{cll+25_var}.
Beyond expressivity, recent studies have characterized the statistical and computational trade-offs of large generative models, establishing provably efficient criteria for diffusion Transformers~\cite{hwsl24, hwl+24}. Meanwhile, several works identify inherent limitations of gradient-based optimization, demonstrating provable failures of Transformers in learning simple Boolean functions~\cite{hzs25+, cssz25}.
In addition, a series of works investigate the computational and architectural properties of modern Transformer variants, analyzing the fine-grained complexity of attention mechanisms~\cite{as23, as24_iclr, as24_neurips, as25_rope}, stability against rank collapse~\cite{as25_rank}, and efficient gradient computation~\cite{chl+24_rope, lss+24}. Related studies extend these analyses to LoRA fine-tuning~\cite{hsk+24}, modern Hopfield models~\cite{hlsl24}, and higher-order tensor attention architectures~\cite{lssz24_tat}.
A parallel research direction studies in-context learning as an emergent algorithmic capability of Transformers, analyzing its mechanisms through associative memory retrieval~\cite{wsh+24, hlzl25}, gradient-based multi-step updates in looped architectures~\cite{cll+25_icl}, and algorithm emulation~\cite{gsx23, csy23b, swxl24, wsh+24}.
Together, these efforts provide a comprehensive theoretical understanding of the limits and capabilities of modern neural networks, aligning with empirical findings that modern NNs, such as diffusion models~\cite{cgh+25,ghh+25,cgs+25,ghs+25_text} and language models~\cite{whs23,syz25,ghsz25,sma+25}, may still fail on simple problems.

\section{Preliminary} \label{sec:prelim}

We begin by introducing some basics of crystal representations in Section~\ref{sec:repr_crystal}, and then introduce the background knowledge of equivariant graph neural networks (EGNNs) in Section~\ref{sec:egnn}. In Section~\ref{sec:preli:circuit_complexity}, we present the fundamental concepts of circuit complexity.

\subsection{Representation of Crystal Structures} \label{sec:repr_crystal}

The unit cell representation describes the basis vectors of the unit cell, and all the atoms in a unit cell.

\begin{definition}[Unit cell representation of a crystal structure, implicit in page 3 of~\cite{jhl+23}] \label{def:unit_cell}
    Let $A:= [a_1, a_2, \ldots, a_n] \in \R^{h \times n}$ denote the set of description vectors for each atom in the unit cell. 
    Let $X:= [x_1, x_2 ,\ldots ,x_n] \in \R^{3 \times n}$ denote the list of Cartesian coordinates of each atom in the unit cell.
    Let $L := [l_1, l_2, l_3] \in \R^{3\times 3}$ denote the lattice matrix, where $l_1, l_2,l_3$ are linearly independent. The unit cell representation of a crystal structure is expressed by the triplet ${{\cal C}}:=(A,X,L)$. 
\end{definition}

The atom set representation describes a set containing an infinite number of atoms in the periodic crystal structure. 

\begin{definition}[Atom set representation of a crystal structure, implicit in page 3 of~\cite{jhl+23}] \label{def:atom_set}
    Let ${{\cal C}}:=(A,X,L)$ be a unit cell representation of crystal structure as Definition~\ref{def:unit_cell}, where $A:= [a_1, a_2, \ldots, a_n] \in \R^{h \times n}$, $X:= [x_1, x_2 ,\ldots ,x_n] \in \R^{3 \times n}$, and $L := [l_1, l_2, l_3] \in \R^{3\times 3}$. The atom set representation of ${\cal C}$ is defined as follows:
    \begin{align*}
        S({\cal C}) := \{(a, x): a = a_i, x = x_i + Lk, \forall i \in [n], \forall k \in \mathbb{Z}^{3}\},
    \end{align*}
    where $k$ is a length-$3$ column integer vector.
\end{definition}

\begin{definition}[Fractional coordinate matrix, implicit in page 3 of~\cite{jhl+23}]\label{def:frac_coor_mat}
      Let ${{\cal C}}:=(A,X,L)$ be a unit cell representation of crystal structure as Definition~\ref{def:unit_cell}, where $A:= [a_1, a_2, \ldots, a_n] \in \R^{h \times n}$, $X:= [x_1, x_2 ,\ldots ,x_n] \in \R^{3 \times n}$, and $L := [l_1, l_2, l_3] \in \R^{3\times 3}$. 
      We say that $F:= [f_1, f_2, \ldots, f_n] \in [0,1)^{3\times n}$ is a fractional coordinate matrix for ${{\cal C}}$ if and only if for all $i \in [n]$, we have:
      \begin{align*}
          x_i = Lf_i.
      \end{align*}
\end{definition}

\begin{definition}[Fractional unit cell view of a crystal structure, implicit in page 3 of~\cite{jhl+23}] \label{def:frac_unit_cell}
    Let ${{\cal C}}:=(A,X,L)$ be a unit cell representation of crystal structure as Definition~\ref{def:unit_cell}. Let $F$ be a fractional coordinate matrix as Definition~\ref{def:frac_coor_mat}. The fractional unit cell representation of ${\cal C}$ is a triplet ${\cal C}_{\mathrm{frac}}:= (A,F,L)$.
\end{definition}

\begin{fact}[Equivalence of unit cell representations, informal version of Fact~\ref{fac:equi_unit_cell_repr_formal}] \label{fac:equi_unit_cell_repr}
    For any fractional unit cell representation ${\cal C}_{\mathrm{frac}}:=(A,F,L)$ as Definition~\ref{def:frac_unit_cell}, there exists a unique corresponding non-fractional unit cell representation ${\cal C}:=(A,X,L)$ as definition~\ref{def:unit_cell}.
\end{fact}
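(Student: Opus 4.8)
The plan is to exploit the one structural hypothesis that is available, namely that the lattice matrix $L = [l_1, l_2, l_3]$ has linearly independent columns, so $\det L \neq 0$ and $L$ is invertible as a linear map on $\R^3$. Everything else is bookkeeping, and I would organize the argument into the usual existence part and uniqueness part.

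For existence, given $\mathcal{C}_{\mathrm{frac}} = (A, F, L)$ with $F = [f_1, \ldots, f_n] \in [0,1)^{3\times n}$, I would simply define $X := LF$, i.e.\ set $x_i := L f_i$ for each $i \in [n]$. Then $X \in \R^{3\times n}$, the columns of $L$ are still linearly independent, and $A$ is unchanged, so $\mathcal{C} := (A, X, L)$ is a legitimate unit cell representation in the sense of Definition~\ref{def:unit_cell}. By construction $x_i = L f_i$ for all $i$, so $F$ is a fractional coordinate matrix for $\mathcal{C}$ (Definition~\ref{def:frac_coor_mat}), and hence $\mathcal{C}_{\mathrm{frac}}$ is exactly the fractional unit cell view of $\mathcal{C}$ (Definition~\ref{def:frac_unit_cell}). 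This yields a corresponding non-fractional representation.

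For uniqueness, I would suppose $\mathcal{C} = (A, X, L)$ and $\mathcal{C}' = (A, X', L)$ are both non-fractional unit cell representations to which $\mathcal{C}_{\mathrm{frac}} = (A, F, L)$ corresponds, meaning $F$ is a fractional coordinate matrix for each. Then $x_i = L f_i = x_i'$ for every $i \in [n]$, so $X = X'$; since $A$ and $L$ already agree, $\mathcal{C} = \mathcal{C}'$. Equivalently, $X = LF$ is forced, and invertibility of $L$ makes this the unique solution of $L^{-1} X = F$.

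I do not expect a genuine obstacle: the statement is essentially the observation that $x \mapsto Lx$ is a bijection of $\R^3$. The only points worth stating carefully are that linear independence of $l_1, l_2, l_3$ (part of Definition~\ref{def:unit_cell}) is precisely what the argument needs, since a singular $L$ would destroy both injectivity and the existence of $L^{-1}$, and that the constraint $F \in [0,1)^{3\times n}$ is irrelevant in this direction (it only bites when going the other way, where $L^{-1} x_i$ need not land in $[0,1)^3$). The formal version in the appendix, Fact~\ref{fac:equi_unit_cell_repr_formal}, presumably just records these two steps with the invertibility of $L$ spelled out explicitly.
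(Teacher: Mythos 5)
Your proposal is correct and follows essentially the same route as the paper: define $X := LF$ columnwise for existence, and for uniqueness observe that Definition~\ref{def:frac_coor_mat} forces any corresponding $X$ to equal $LF$, so the paper's contradiction argument and your direct argument coincide. Minor note: the paper's existence step writes $X = L^{-1}F$, which is inconsistent with its own uniqueness step ($X_1 = X_2 = LF$) and with Definition~\ref{def:frac_coor_mat}; your $X = LF$ is the correct choice, and as you observe, invertibility of $L$ is not even needed for this direction.
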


Therefore, since both unit cell representations are equivalent, we only use the fractional unit cell representation in this paper. For notation simplicity, we may abuse the notation ${\cal C}$ to denote ${\cal C}_{\mathrm{frac}}$ in the following parts of this paper.

\begin{definition}[Fractional atom set representation of a crystal structure, implicit in page 3 of~\cite{mcsw24}]\label{def:frac_atom_set}
    Let ${\cal C}_{\mathrm{frac}}:= (A, F, L)$ be a fractional unit cell representation of a crystal structure as Definition~\ref{def:frac_unit_cell}, where $A:= [a_1, a_2,\ldots, a_n] \in \R^{h\times n}$, $F:= [f_1, f_2, \ldots, f_n] \in \R^{3\times n}$, and $L := [l_1, l_2, l_3] \in \R^{3\times 3}$. The atom set representation of ${\cal C}$ is defined as follows:
    \begin{align*}
        S_{\mathrm{frac}}({\cal C}) := \{(a, f): a = a_i, f = f_i + k, \forall i \in [n], \forall k \in \mathbb{Z}^{3}\},
    \end{align*}
    where $k$ is a length-$3$ column integer vector.
\end{definition}

\subsection{Equivariant Graph Neural Network Architecture} \label{sec:egnn}

We first define a useful transformation that computes the distance feature between each two atoms. 
\begin{definition}[$k$-order Fourier transform of relative fractional coordinates] \label{def:fourier_frac_coord}
    Let $x \in (-1 ,1)^3$ be a length-$3$ column vector. Without loss of generality, we let $k \in \mathbb{Z}_+$ be a positive even number.
    Let the output of the $k$-order Fourier fractional coordinates be a matrix $Y \in \R^{3 \times k}$ such that $Y:= \psi_{\mathrm{FT}, k}(x)$. For all $i \in [3], j \in [k]$, each element of $Y$ is defined as:
    \begin{align*}
        Y_{i,j}:= \begin{cases}
        \sin(\pi j x_i), & j\mathrm{
        ~is~even}; \\
        \cos(\pi j x_i), & j\mathrm{
        ~is~odd}.
    \end{cases}
    \end{align*}
\end{definition}

Then, we define a single layer for the Equivariant Graph Neural Network (EGNN) on the fractional unit cell representation of crystals.

\begin{definition}[Pairwise Message] \label{def:message}
    Let $\mathcal{C}:=(A,F,L)$ be a fractional unit cell representation as Definition~\ref{def:frac_unit_cell}, where $A \in \R^{h\times n}$, $F:= [f_1, f_2, \ldots, f_n] \in \R^{3\times n}$, and $L \in \R^{3\times 3}$. Let $H := [h_1, h_2, \ldots, h_n]\in \R^{d\times n}$ be a hidden neural representation for all the atoms. 
    Let $\psi_{\mathrm{FT},k}$ be a $k$-order Fourier transform of relative fractional coordinates as Definition~\ref{def:fourier_frac_coord}. 
    Let $\phi_{\mathrm{msg}}:\R^{d} \times \R^d \times \R^{3\times 3}\times \R^{3\times k} \to \R^d$ be an arbitrary function. We define the message $\mathsf{MSG}_{i,j}(F,L,H) \in \R^d$ between the $i$-th atom and the $j$-th atom for all $i, j \in [n]$ as follows:
    \begin{align*}
        \mathsf{MSG}_{i,j}(F,L,H):= \phi_{\mathrm{msg}}(h_i, h_j, L^\top L, \psi_{\mathrm{FT}, k}(f_i - f_j)).
    \end{align*}
\end{definition}

\begin{definition}[One EGNN layer]\label{def:one_egnn_layer}
    Let $\mathcal{C}:=(A,F,L)$ be a fractional unit cell representation as Definition~\ref{def:frac_unit_cell}, where $A:= [a_1, a_2,\ldots, a_n] \in \R^{h\times n}$, $F:= [f_1, f_2, \ldots, f_n] \in \R^{3\times n}$, and $L := [l_1, l_2, l_3] \in \R^{3\times 3}$. Let $H := [h_1, h_2, \ldots, h_n]\in \R^{d\times n}$ be a hidden neural representation for all the atoms. Let $\phi_{\mathrm{upd}}:\R^{d} \times \R^{d} \to  \R^d$ be an arbitrary function. 
    Let $\mathsf{MSG}$ be the message function defined as Definition~\ref{def:message}.
    Let the output of the $i$-th EGNN layer $\mathsf{EGNN}_i(A,F,L,H)$ be a matrix $Y = [y_1, y_2,\ldots,y_n] \in \R^{d\times n}$, i.e., $Y:= \mathsf{EGNN}_i(F,L,H)$. 
    For all $i \in [n]$, each column of $Y$ is defined as:
    \begin{align*}
        y_i := h_i + \phi_{\mathrm{upd}}(h_i, \sum_{j=1}^n \mathsf{MSG}_{i,j}(F, L, H)). 
    \end{align*}
\end{definition}

\begin{definition}[EGNN] \label{def:egnn}
    Let $\mathcal{C}:=(A,F,L)$ be a fractional unit cell representation as Definition~\ref{def:frac_unit_cell}, where $A\in \R^{h\times n}$, $F \in \R^{3\times n}$, and $L\in \R^{3\times 3}$. Let $q$ be the number of $\mathsf{EGNN}$ layers. Let $\phi_{\mathrm{in}}:\R^{h\times n} \to \R^{d\times n}$ be an arbitrary function for the input transformation. The $q$-layer $\mathsf{EGNN}:\R^{d\times n}\times \R^{3\times n} \times \R^{3\times 3} \to \R^{d\times n}$ can be defined as follows:
    \begin{align*}
        \mathsf{EGNN}(A,F,L) := \mathsf{EGNN}_q \circ \mathsf{EGNN}_{q-1} \circ\cdots \circ \mathsf{EGNN}_1(\phi_{\mathrm{in}}(A), F, L).
    \end{align*}
\end{definition}

\begin{remark}
    While functions $\phi_{\mathrm{msg}}$, $\phi_{\mathrm{upd}}$, and $\phi_{\mathrm{in}}$ are usually implemented as simple MLPs in practice, our theoretical result on equivariance and invariance works for any possible instantiation of these functions.
\end{remark}

\subsection{Class of Circuit Complexity}\label{sec:preli:circuit_complexity}
Here, we present key preliminaries and Boolean circuits for circuit complexity.

\begin{definition}[Boolean Circuit, page 102 on~\cite{ab09}]\label{def:boolean_circuit}
For a positive integer $n$, a Boolean circuit can be represented as a directed acyclic graph whose vertices, called gates, implement a mapping from $n$-bit binary strings to a single bit. Gates without incoming connections serve as the inputs, corresponding to the $n$ binary inputs. The remaining gates evaluate a Boolean function on the outputs from their preceding gates.
\end{definition}

Since each circuit is limited to inputs of a predetermined length, we consider a sequence of circuits is employed to handle languages that comprise strings of varying lengths.

\begin{definition}[Recognition of languages by circuit families, page 103 on~\cite{ab09}]\label{def:circuit_recognize_languages}
Consider a language $L \subseteq \{0,1\}^*$ and a family of Boolean circuits $C = \{C_n\}_{n \in \mathbb{N}}$, $C$ is said to recognize $L$ if, for each string $x$ over $(0, 1)$,
    $C_{|x|}(x) = 1 \iff x \in L.$
\end{definition}

Imposing bounds on circuits allows us to define certain classes of complexity, for example $\mathsf{NC}^i$.

\begin{definition}[$\mathsf{NC}^i$, page 40 on~\cite{ab09}]\label{def:nc}
A language is in $\mathsf{NC}^i$ if their exists a Boolean circuit family that decides it, using at most $O((\log n)^i)$ depth, polynomial size $O(\mathrm{poly}(n))$, and composed of bounded fan-in $\mathsf{AND}$, $\mathsf{OR}$, and $\mathsf{NOT}$ operations.
\end{definition}

By allowing $\mathsf{AND}$ and $\mathsf{OR}$ gates to have unlimited fan-in, we obtain more expressive circuits, which define the class $\mathsf{AC}^i$.

\begin{definition}[$\mathsf{AC}^i$,~\cite{ab09}]\label{def:ac}
A language is in $\mathsf{AC}^i$ if their exists a Boolean circuit family that decides it using polynomial many gates $O(\mathrm{poly}(n))$, depth bound by $O((\log n)^i)$, and built from $\mathsf{OR}$, $\mathsf{NOT}$, and $\mathsf{AND}$ gates, with $\mathsf{AND}$ gates and $\mathsf{OR}$ gates may take arbitrarily many inputs.
\end{definition}

Given that $\mathsf{MAJORITY}$ gates are capable of implementing $\mathsf{NOT}$, $\mathsf{OR}$ and $\mathsf{AND}$, an even larger class $\mathsf{TC}^i$ can be defined.

\begin{definition}[$\mathsf{TC}^i$,~\cite{ab09}]\label{def:tc}
A language is in $\mathsf{TC}^i$ if a Boolean circuit family exists that recognizes it using polynomial many gates $O(\mathrm{poly}(n))$, depth $O((\log n)^i)$, and consisting of $\mathsf{OR}$, $\mathsf{NOT}$ and $\mathsf{AND}$, and unbounded fan-in $\mathsf{MAJORITY}$ gates, where each $\mathsf{MAJORITY}$ gate returns $1$ when the number of $1$s among its inputs exceeds the number of $0$s.
\end{definition}

\begin{remark}
According to Definition~\ref{def:tc}, the $\mathsf{MAJORITY}$ gates of $\mathsf{TC}^i$ is able to substituted with $\mathsf{MOD}$ or $\mathsf{THRESHOLD}$ gates. Circuits that employ such gates are referred to as threshold circuits.
\end{remark}

\begin{definition}[$\mathsf{P}$, implicit in page 27 on~\cite{ab09}]
A language is in $\mathsf{P}$ if their a deterministic Turing machine exists which determines membership within polynomial time.
\end{definition}

\begin{fact}[Hierarchy folklore,~\cite{ab09,v99}]
The following class inclusions are valid for all $i \geq 0$:
\begin{align*}
    \mathsf{NC}^i \subseteq \mathsf{AC}^i \subseteq \mathsf{TC}^i \subseteq \mathsf{NC}^{i+1} \subseteq \mathsf{P}.
\end{align*}
\end{fact}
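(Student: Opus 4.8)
The plan is to prove the chain link by link, in each case exhibiting an explicit gate-by-gate simulation that blows up the size only polynomially and controls the depth. For the first link $\mathsf{NC}^i \subseteq \mathsf{AC}^i$ there is nothing to do: a circuit built from bounded-fan-in $\mathsf{AND}$, $\mathsf{OR}$, $\mathsf{NOT}$ gates is already a legal $\mathsf{AC}^i$ circuit, since Definition~\ref{def:ac} only relaxes the fan-in restriction while keeping the same depth and size bounds. The remaining three links each require a small construction, and the only genuinely substantive one is $\mathsf{TC}^i \subseteq \mathsf{NC}^{i+1}$.

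For $\mathsf{AC}^i \subseteq \mathsf{TC}^i$ I would replace every gate of an $\mathsf{AC}^i$ circuit by an equivalent constant-depth gadget over $\{\mathsf{MAJORITY},\mathsf{AND},\mathsf{OR},\mathsf{NOT}\}$. A $\mathsf{NOT}$ gate is kept as is. An unbounded $m$-ary $\mathsf{OR}$ equals the predicate $[\sum_{j}x_j \ge 1]$ and an unbounded $m$-ary $\mathsf{AND}$ equals $[\sum_{j}x_j \ge m]$; a threshold predicate $[\sum_{j}x_j \ge t]$ is realized by a single $\mathsf{MAJORITY}$ gate fed the $m$ real inputs together with a polynomial number of constant $0$/$1$ inputs chosen so that the majority tips exactly at the threshold $t$ (pad with $m-2t+1$ ones when $m \ge 2t-1$, and with $2t-1-m$ zeros otherwise, which automatically makes the total fan-in odd and so rules out ties). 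This is a depth-one replacement, so the overall depth is unchanged and the size grows by at most a $\poly(n)$ factor; hence the circuit is in $\mathsf{TC}^i$.

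The crux is $\mathsf{TC}^i \subseteq \mathsf{NC}^{i+1}$, and the key sub-fact is that a single $\mathsf{MAJORITY}$ (equivalently, threshold) gate on $m$ inputs is computed by a bounded-fan-in $\mathsf{AND}/\mathsf{OR}/\mathsf{NOT}$ circuit of depth $O(\log m)$ and size $\poly(m)$: form the integer $\sum_{j=1}^m x_j$ by a balanced tree of carry-save adders (each $3$-to-$2$ reduction is $O(1)$ depth with bounded fan-in, since every output bit depends on $\le 3$ input bits), reducing $m$ one-bit numbers to two $O(\log m)$-bit numbers in $O(\log m)$ depth, add those two numbers with a carry-lookahead adder, and compare the $O(\log m)$-bit result against the constant $\lfloor m/2\rfloor$ — the last two steps are $\mathsf{NC}^1$ subcircuits of depth $O(\log m)$. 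Now take any $\mathsf{TC}^i$ circuit of depth $D = O(\log^i n)$ and size $s = \poly(n)$; every gate has fan-in at most $s$, so replacing each $\mathsf{MAJORITY}$ gate by this subcircuit and each unbounded $\mathsf{AND}/\mathsf{OR}$ by a balanced binary tree costs $O(\log s) = O(\log n)$ extra depth per original gate. The composed circuit has depth $O(\log^i n)\cdot O(\log n) = O(\log^{i+1} n)$, polynomial size, and bounded fan-in, so it witnesses membership in $\mathsf{NC}^{i+1}$; the $i=0$ case $\mathsf{TC}^0 \subseteq \mathsf{NC}^1$ is covered since $O(1)\cdot O(\log n) = O(\log n)$.

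Finally, $\mathsf{NC}^{i+1} \subseteq \mathsf{P}$: under the (logspace- or DLOGTIME-) uniform reading of these circuit classes, a polynomial-time deterministic Turing machine first writes down the description of $C_n$ for input length $n$, then evaluates the circuit by processing gates in topological order, computing each gate's value from those of its predecessors; since $C_n$ has $\poly(n)$ gates and each evaluation step is polynomial, the whole run is polynomial-time, and its output $C_n(x)$ equals $1$ iff $x$ lies in the language. I expect the main obstacle to be the depth bookkeeping in the $\mathsf{TC}^i \subseteq \mathsf{NC}^{i+1}$ step — ensuring the per-gate $O(\log n)$ overhead \emph{multiplies} the original $O(\log^i n)$ depth rather than compounding worse — together with stating cleanly the classical fact that iterated addition of $m$ bits lies in $\mathsf{NC}^1$; all the other links reduce to routine accounting of sizes and fan-ins.
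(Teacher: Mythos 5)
The paper does not prove this statement at all --- it is stated as folklore with citations to \cite{ab09,v99} --- so there is nothing to match against; your write-up is the standard textbook argument and it is correct. Each link checks out: the padding counts for simulating a threshold $[\sum_j x_j \ge t]$ by one $\mathsf{MAJORITY}$ gate are right (both cases yield odd total fan-in and tip exactly at $t$), the carry-save-adder tree plus final comparison gives the classical depth-$O(\log m)$, size-$\poly(m)$ bounded fan-in simulation of $\mathsf{MAJORITY}$, the per-gate depth overhead indeed multiplies the original $O(\log^i n)$ depth to give $\mathsf{NC}^{i+1}$, and you correctly flag that the last inclusion into $\mathsf{P}$ needs the uniform reading of the classes (non-uniform families would break it). One small simplification: under the paper's Definition~\ref{def:tc}, $\mathsf{TC}^i$ circuits are already allowed unbounded fan-in $\mathsf{AND}$, $\mathsf{OR}$, and $\mathsf{NOT}$ gates alongside $\mathsf{MAJORITY}$, so $\mathsf{AC}^i \subseteq \mathsf{TC}^i$ is immediate by definition and your padding gadget, while correct, is only needed if one insists on rewriting every gate as a threshold.
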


\begin{definition}[$L$-uniform, following~\cite{ab09}]\label{def:l_uniformity}
We call $C = \{C_n\}_{n \in \mathbb{N}}$ L-uniform when there exists a Turing machine can produce $C_n$, given $1^n$ input while using space $O(\log n)$. Language $L$ is a member of an L-uniform class such as $\mathsf{NC}^i$ when it can be recognized by an L-uniform circuit family ${C_n}$ satisfying the requirements of $\mathsf{NC}^i$.
\end{definition}

Then, we introduce a stronger notion regarding uniformity defined in terms of a time bound.

\begin{definition}[$\mathsf{DLOGTIME}$-uniform]\label{def:dlogtime_uniform}
A sequence of circuits $C = \{C_n\}_{n \in \mathbb{N}}$ is called $\mathsf{DLOGTIME}$-uniform when there exists a Turing machine that produces a description of $C_n$ within $O(\log n)$ time given input $1^n$. A language belongs to $\mathsf{DLOGTIME}$-uniform class if there exists such a sequence of circuits that recognizes it while also meeting the required size and depth bounds.
\end{definition}

\subsection{Floating-point numbers}
In this subsection, we present the foundational concepts of floating-point numbers ($\mathsf{FPN}$s) and operations, which provide the computational basis for executing GNNs efficiently on practical hardware. 

\begin{definition}[Floating-point numbers,~\cite{chi24}]\label{dfn:fp_num}
A floating-point number with $p$-bit can be expressed by two binary integers $s$ and $e$, where the mantissa $|s|$ takes values in $\{0\} \cup [2^{p-1}, 2^p)$, while the exponent $e$ lies within $[-2^p, 2^p - 1]$. The value of the $\mathsf{FPN}$ is calculated as $s \cdot 2^e$. When $e = 2^p$, the value corresponds to positive infinity or negative infinity, determined by the sign of $s$. We denote by $\mathbb{F}_p$ the set containing all $p$-bit floating-point numbers.
\end{definition}

\begin{definition}[Quantization,~\cite{chi24}]\label{dfn:rounding}
    Consider a real number $r\in\R$ be a real number with infinite precision. Its nearest $p$-bit representation is written as $\mathrm{round}_p(r)\in\mathbb{F}_p$. If two such representations are equally close, $\mathrm{round}_p(r)$ is defined as the one with an even significand. 
\end{definition}

Then, we introduce the key floating-point computations involved in producing neural network outputs.

\begin{definition}[Floating-point arithmetic,~\cite{chi24}]\label{dfn:fp_ops}
    Let $x, y \in \Z$. The operator $\intdiv$ is defined by:
    \begin{align*}
  x \intdiv y &:= \begin{cases}
    \frac{x}{y} & \text{when $\frac{x}{y}$ is an integer multiple of $\frac{1}{4}$} \\
    \frac{x}{y} + \frac{1}{8} & \text{otherwise.}
  \end{cases}
\end{align*}
Given two floating-points $\float{s_1}{e_1}$ and $\float{s_2}{e_2}$ with $p$-bits, we formulate their basis arithmetic operations on them as: 
    \begin{align*}
\mathrm{addition:} \float{s_1}{e_1} + \float{s_2}{e_2} &:= \begin{cases}
\mathrm{round}_p({\float{s_1 + s_2 \intdiv 2^{e_1-e_2}}{e_1}}) & \text{if $e_1 \ge e_2$} \\
\mathrm{round}_p({\float{s_1 \intdiv 2^{e_2-e_1} + s_2}{e_2}}) & \text{if $e_1 \le e_2$}
\end{cases} \\
\mathrm{multiplication:} \float{s_1}{e_1} \times \float{s_2}{e_2} &:= \mathrm{round}_p(\float{s_1s_2}{e_1+e_2}) \\
\mathrm{division:} \float{s_1}{e_1} \div \float{s_2}{e_2} &:= 
\mathrm{round}_p({\float{s_1 \cdot 2^{p-1} \intdiv s_2}{e_1-e_2-p+1}}) \\
\mathrm{comparison:} \float{s_1}{e_1} \le \float{s_2}{e_2} &\Leftrightarrow \begin{cases}
s_1 \le s_2 \intdiv 2^{e_1-e_2} & \text{if $e_1 \ge e_2$} \\
s_1 \intdiv 2^{e_2-e_1} \le s_2 & \text{if $e_1 \le e_2$.}
\end{cases}
\end{align*}
\end{definition}

Building on the previous definitions, we show that these basic operations can be efficiently executed in parallel using simple $\mathsf{TC}^0$ circuit constructions, as stated in the lemma below:

\begin{lemma}[Implementing $\mathsf{FPN}$ operations using $\mathsf{TC}^0$ circuits,~\cite{chi24}]\label{lem:fp_ops_tc0}
    Let $p$ be a positive integer representing the number of digits. Assume $p\leq \poly(n)$, we have the following holds:
    \begin{itemize}
        \item Basic Operations: Arithmetic operations ``$+$'', ``$\times$'', ``$\div$'', as well as comparison ($\leq$) between two floating-points with $p$-bit (see Definition~\ref{dfn:fp_num}), are realizable with uniform threshold circuits of $O(1)$-depth and polynomial size in $n$. We denote the maximal depth needed for these fundamental operations by $d_{\mathrm{std}}$.
        \item Repeated Operations: Multiplying $n$ $p$-bit floating-point numbers, as well as the aggregated sum of $n$ $p$-bit $\mathsf{FPN}$s (rounding performed post-summation) is computable via uniform threshold circuits with $\poly(n)$ size and  $O(1)$-depth. 
        Let $d_\otimes$ and $d_\oplus$ denote the maximum circuit depth  for multiplication and addition.
    \end{itemize}
\end{lemma}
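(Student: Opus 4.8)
The plan is to reduce every floating-point primitive to a bounded composition of \emph{integer} arithmetic primitives that are already known to lie in $\mathsf{DLOGTIME}$-uniform $\mathsf{TC}^0$, and then to argue that the glue logic (exponent bookkeeping, variable shifting, rounding, the $\intdiv$ operator, and the handling of $\pm\infty$) is itself computable in constant depth and polynomial size. First I would record the integer toolkit on $p$-bit integers with $p \le \poly(n)$: addition, subtraction, comparison, and multiplication are textbook $\mathsf{TC}^0$ constructions; integer division and iterated addition (summing $n$ integers) are in $\mathsf{TC}^0$; and the genuinely hard input, iterated multiplication of $n$ integers, is in $\mathsf{DLOGTIME}$-uniform $\mathsf{TC}^0$ via the Chinese-Remainder-representation argument of Hesse--Allender--Barrington. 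I would also note that a variable left/right shift by an amount $t \in [0,\poly(n)]$ is in $\mathsf{TC}^0$ — either as a multiplexed selection over the $\poly(n)$ possible shift amounts (each a fixed rewiring) or as a single integer multiplication by $2^t$, where $2^t$ is produced from the binary encoding of $t$ by a trivial decoder — and that the $\intdiv$ operator of Definition~\ref{dfn:fp_ops} only ever divides by a power of two, so it is a shift plus a fixed low-order correction, hence $\mathsf{TC}^0$.

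For the single operations I would proceed case by case. For the addition $\float{s_1}{e_1} + \float{s_2}{e_2}$: compute $e_1 - e_2$ and its sign, shift the mantissa of the smaller-exponent operand right by $|e_1 - e_2|$ (a $\mathsf{TC}^0$ shift, with $\intdiv$ rounding on the dropped bits), add the aligned mantissas, then renormalize (locate the leading one and shift — again a multiplexed $\mathsf{TC}^0$ shift) and apply $\mathrm{round}_p$, which inspects a constant number of bit positions around the $p$-th significant bit together with an $\mathsf{OR}$ over the tail (sticky bit). Multiplication is one integer multiply of mantissas, one integer add of exponents, then renormalize-and-round; division is the shift $s_1\cdot 2^{p-1}$ followed by one integer division by $s_2$, the exponent set to $e_1 - e_2 - p + 1$, then renormalize-and-round; comparison is an aligning shift followed by a mantissa comparison plus the usual sign casework. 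Each primitive is thus a composition of $O(1)$ many $\mathsf{TC}^0$ subcircuits, giving $O(1)$ depth $d_{\mathrm{std}}$ and $\poly(n)$ size, with the special-value case $e = 2^p$ (i.e. $\pm\infty$) handled by a constant-size lookup bolted on at the end.

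For the iterated operations: the product of $n$ $p$-bit $\mathsf{FPN}$s feeds the $n$ mantissas to the iterated-integer-multiplication circuit and the $n$ exponents to the iterated-integer-addition circuit; the combined mantissa has $O(np)$ bits, still polynomial, so one renormalize-and-round pass yields a $p$-bit result and $d_\otimes = O(1)$. For the sum of $n$ $\mathsf{FPN}$s with rounding performed post-summation, I would compute $e^\star := \max_i e_i$ (a constant-depth tournament of pairwise comparisons, or a threshold trick), shift each $s_i$ right by $e^\star - e_i$ into a common $O(p + \log n)$-bit fixed-point format in parallel with no intermediate rounding, sum the aligned integers with the iterated-integer-addition circuit, and renormalize-and-round once at the end, giving $d_\oplus = O(1)$. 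Uniformity is immediate since the wiring of every gadget — shifters, the CRR machinery, adders, the rounding logic, the $\max$ tournament — is generated by a fixed, easily-indexed pattern printable by a $\mathsf{DLOGTIME}$ (a fortiori $L$-uniform) machine. The main obstacle is not any single construction but importing the iterated-integer-multiplication-in-$\mathsf{TC}^0$ result with its uniformity intact; everything else is bounded-depth bookkeeping layered on top of it, and the only place the hypothesis $p \le \poly(n)$ is really needed is to keep the intermediate bit-widths, and hence the circuit size, polynomial.
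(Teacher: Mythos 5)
First, note that the paper does not prove this lemma at all: it is imported wholesale from \cite{chi24}, so there is no in-paper argument to compare against. Measured against the standard proof in that cited work, your reduction of the \emph{single} operations ($+$, $\times$, $\div$, $\leq$) to $O(1)$ many integer-arithmetic primitives in uniform $\mathsf{TC}^0$, your treatment of $\intdiv$ as a power-of-two shift with a low-order correction, and your use of Hesse--Allender--Barrington iterated integer multiplication for the $n$-fold product (mantissas through iterated multiplication, exponents through iterated addition, one renormalize-and-round at the end) are all sound and essentially the accepted route. (A minor slip: a ``tournament'' of pairwise comparisons for $\max_i e_i$ is logarithmic depth; you want all $\binom{n}{2}$ comparisons in parallel followed by a selection, which is the constant-depth version.)

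The genuine gap is in your construction for the iterated sum. You propose to align every summand to the common exponent $e^\star=\max_i e_i$ inside an $O(p+\log n)$-bit fixed-point window and add. But under Definition~\ref{dfn:fp_num} the exponents range over $[-2^p,2^p-1]$ with $p\le\poly(n)$, so the exponent spread $e^\star-e_i$ can be of order $2^p$, i.e.\ exponential in $n$; truncating to an $O(p+\log n)$-bit window therefore discards exactly the information that matters under catastrophic cancellation. Concretely, take $x_1=\float{s}{e}$, $x_2=\float{-s}{e}$, and $x_3=\float{s}{e'}$ with $e-e'\gg p+\log n$: the exact sum is $x_3$ and ``rounding performed post-summation'' must return $x_3$, whereas your aligned window reduces $x_3$ to zero (or a sticky bit) and outputs $0$ or garbage near $2^{e^\star-p}$. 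Keeping the full alignment width is not an option either, since $e^\star-\min_i e_i$ bits is superpolynomial. The correct argument (as in the cited work) has to handle this: e.g.\ sort the addends by exponent, split them into blocks separated by exponent gaps larger than $p+\log n+O(1)$ so that each block has polynomial spread and can be summed \emph{exactly} in integer arithmetic, then observe that the rounded result is determined by the highest block with nonzero sum together with sign/sticky information from the lower blocks. Your proposal is missing this cancellation-aware decomposition, and as stated the $d_\oplus$ claim does not follow; the rest of the argument (including the remark that $p\le\poly(n)$ is what keeps all bit-widths polynomial) is fine.
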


Besides standard floating-point computations, certain advanced or composite operations can likewise be executed within $\mathsf{TC}^0$ circuits, shown in the lemmas below:

\begin{lemma}[Exponential function approximation within $\mathsf{TC}^0$,~\cite{chi24}]\label{lem:exp}
    Assuming the precision $p$ is at most $\poly(n)$. Each floating-point input $x$ with $p$-bit can have its exponential $\exp(x)$ simulated by a uniform threshold circuit with polynomial size and fixed depth $d_{\mathrm exp}$, achieving a relative error no greater than $2^{-p}$.
\end{lemma}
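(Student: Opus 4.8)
The plan is to reduce $\exp$ to a short Taylor polynomial after an argument range reduction, and then realize every arithmetic step in constant depth via Lemma~\ref{lem:fp_ops_tc0}. First, I would dispatch the degenerate cases by threshold comparisons of depth $O(d_{\mathrm{std}})$: if the input $x=\float{s}{e}$ is $\pm\infty$ (i.e.\ $e=2^p$) or if $|x|$ exceeds roughly $2^p\ln 2$, the circuit outputs $+\infty$ or the smallest representable positive number (resp.\ $0$) as appropriate; otherwise $|x|\le 2^p\ln 2=\poly(n)$. Next comes the range reduction: compute $k:=\mathrm{round}_{p'}(x\div \ln 2)\in\Z$ and $r:=x-k\ln 2$, chosen so that $r\in[0,\ln 2)$, whence $\exp(x)=2^{k}\exp(r)$. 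Here $\ln 2$ is hard-wired to an inflated precision $p'=\Theta(p)$ in the uniform description of the circuit; the division, the rounding to an integer, the multiplication, and the subtraction are each $O(1)$-depth by Lemma~\ref{lem:fp_ops_tc0}. Since $|k|\le 2^{p}$, $k$ fits in $O(p)$ bits, and the final multiplication by $2^{k}$ is just an integer addition into the exponent field, of depth $O(d_{\mathrm{std}})$.

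The core step approximates $\exp(r)$ by the truncated series $T_m(r):=\sum_{i=0}^{m} r^{i}/i!$ with $m=\Theta(p)$. Because $0\le r<\ln 2<1$, the Lagrange remainder satisfies $|\exp(r)-T_m(r)|\le e/(m+1)!\le 2^{-p-10}$ for a suitable $m=O(p)$. To build $T_m(r)$ in constant depth: the powers $r^{0},r^{1},\dots,r^{m}$ form a prefix product, i.e.\ $m+1$ iterated products of at most $m$ factors, computable in $O(d_\otimes)$ depth; the factorials $0!,1!,\dots,m!$ are fixed $\poly(n)$-bit integers either hard-wired by the uniform machine or obtained the same way; the $m+1$ quotients $r^{i}/i!$ are formed in parallel in $O(d_{\mathrm{std}})$ depth; and the final sum of $m+1$ floating-point numbers is $O(d_\oplus)$ depth by the repeated-addition part of Lemma~\ref{lem:fp_ops_tc0}. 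Composing these blocks with the range-reduction block gives a fixed overall depth $d_{\mathrm{exp}}=O(d_{\mathrm{std}}+d_\otimes+d_\oplus)$ and polynomial size, and uniformity is inherited because $\ln 2$ to $p'$ bits, $m$, $p'$, and the factorials are all logspace-computable from $1^{n}$ while the sub-circuits invoked are themselves uniform.

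The main obstacle is the error analysis, since the computation performs $\poly(p)$ floating-point operations, each contributing a relative rounding error of order $2^{-p'}$, and these can accumulate. I would handle this exactly as is standard: run the whole computation at internal precision $p'=c\,p$ for a sufficiently large constant $c$ (still $\poly(n)$, so Lemma~\ref{lem:fp_ops_tc0} still applies), and round only the final answer to $p$ bits. Crucially, the choice $r\ge 0$ makes every term of $T_m(r)$ nonnegative, so there is no catastrophic cancellation: each partial sum lies in $[1,e]$, there is no intermediate overflow or underflow of the $p'$-bit range, and the total accumulated rounding error is at most (number of operations)$\times 2^{-p'}\times\exp(r)\le 2^{-p-10}\exp(r)$ once $c$ is large enough. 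Combining the Taylor truncation error, the accumulated rounding error, and the final round-to-$p$ error, and then scaling by the exact power $2^{k}$, yields a total relative error at most $2^{-p}$, which is the claim.
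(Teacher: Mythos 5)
The paper does not actually prove this lemma: it is imported verbatim from \cite{chi24} (as are Lemmas~\ref{lem:fp_ops_tc0}--\ref{lem:sqrt}), so there is no in-paper proof to compare against. Judged on its own terms, your construction --- range reduction $\exp(x)=2^{k}\exp(r)$ followed by a degree-$\Theta(p)$ Taylor polynomial evaluated with the constant-depth iterated product/sum primitives of Lemma~\ref{lem:fp_ops_tc0} at inflated internal precision --- is the standard route and is essentially sound; it would serve as a self-contained substitute for the citation.

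Three points deserve tightening. First, you write $k:=\mathrm{round}_{p'}(x\div\ln 2)$ but then assert $r\in[0,\ln 2)$; round-to-nearest gives $r\in[-\tfrac{\ln 2}{2},\tfrac{\ln 2}{2}]$, so either take $k=\lfloor x\div\ln 2\rfloor$ (consistent with your nonnegativity/no-cancellation argument) or keep round-to-nearest and note that with $|r|\le\tfrac12$ the alternating series still has partial sums in $[e^{-1/2},e^{1/2}]$, so the error accumulation argument survives. Second, the precision inflation is not optional and its size should be stated: since $|k|$ can be as large as $2^{p}$, the absolute error in $k\ln 2$ (and hence in $r$) is about $|k|\,2^{-p'}$, so one needs $p'\ge 2p+O(1)$, which your ``$p'=cp$ for large enough $c$'' covers but only implicitly. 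Third, the relative-error claim cannot literally hold on inputs where $\exp(x)$ overflows or underflows the $p$-bit format ($|x|\gtrsim 2^{p}$); your clamping to $\pm\infty$ or the smallest positive representable is the right convention, but it is a caveat to the lemma's statement itself rather than something your argument can repair. With these clarifications, the depth bound $d_{\mathrm{exp}}=O(d_{\mathrm{std}}+d_{\oplus}+d_{\otimes})$ and polynomial size follow exactly as you argue, and uniformity is unproblematic since $\ln 2$, the factorials, and $m$ can either be generated by the uniform machine or computed inside the circuit by the same $\mathsf{TC}^0$ primitives.
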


\begin{lemma}[Computing the square root operation in $\mathsf{TC}^0$,~\cite{chi24}]\label{lem:sqrt}
     Assuming the precision satisfies  $p$ is at most $\poly(n)$. Any floating-point input $x$ with $p$-bit can have its square root computed using a uniform threshold circuit of gate complexity $O(\mathrm{poly}(n))$ and bounded depth $d_\mathrm{sqrt}$, achieving a relative error no greater than $2^{-p}$.
\end{lemma}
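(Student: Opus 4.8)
The plan is to reduce a $p$-bit floating-point square root to a \emph{constant-depth} composition of the $\mathsf{TC}^0$ primitives supplied by Lemma~\ref{lem:fp_ops_tc0} — in particular iterated products and iterated sums of $\poly(n)$ many $p$-bit $\mathsf{FPN}$s — by expressing $\sqrt{\cdot}$ as a single truncated power series rather than an iterative scheme. (The obvious algorithms, Newton's iteration for the reciprocal square root or binary search on the root, converge in $\Theta(\log p)=\Theta(\log n)$ rounds and would only place the operation in $\mathsf{TC}^1$; and although Lemma~\ref{lem:exp} gives $\exp\in\mathsf{TC}^0$, using $\sqrt x=\exp(\tfrac12\ln x)$ would still require a series expansion for $\ln$, so we proceed directly.) Write the input as $x=\float{s}{e}$ in the sense of Definition~\ref{dfn:fp_num}. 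Handle $s=0$, $s<0$, and $e=2^p$ as trivial edge cases. Otherwise $|s|\in[2^{p-1},2^p)$; reading the parity bit $r\in\{0,1\}$ of $e$, set $e':=(e-r)/2$ and $m:=s\cdot 2^{r}\in[2^{p-1},2^{p+1})$, so that $\sqrt x=\sqrt m\cdot 2^{e'}$. Let $t:=\lfloor\log_2 m\rfloor$ be the leading-bit position of $m$ (computable in $\mathsf{AC}^0\subseteq\mathsf{TC}^0$), split $t=2t'+t_r$ with $t_r\in\{0,1\}$, and set $y:=m\cdot 2^{-t}\in[1,2)$; then $\sqrt m=2^{t'}\cdot 2^{t_r/2}\cdot\sqrt y$, where $2^{t_r/2}\in\{1,\sqrt2\}$ and $\sqrt2$ is hard-wired to $2p$ bits.

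It then remains to compute $\sqrt y$ for $y\in[1,2)$ to relative error $2^{-p-\Theta(1)}$. Recenter about $c:=3/2$: with $u:=(y-c)/c\in[-\tfrac13,\tfrac13]$ we have $\sqrt y=\sqrt c\,\sqrt{1+u}=\sqrt c\sum_{k\ge 0}\binom{1/2}{k}u^{k}$. Truncate at $K:=p+2$; since $|\binom{1/2}{k}|\le 1$ for all $k$ and $|u|\le 1/3$, the tail is at most $\sum_{k>K}(1/3)^k\le 2^{-p-1}$, which is also a relative error since $\sqrt y\ge 1$. For each $k\le K$ in parallel, compute $u^k$ as an iterated product of $k$ copies of $u$ (Lemma~\ref{lem:fp_ops_tc0}); the rationals $\binom{1/2}{k}$ are fixed constants hard-wired to $2p$ bits (their bit-descriptions are simple logspace functions of $k$ and $p$, so uniformity is preserved); form the $K+1$ products and sum them with one iterated $\mathsf{FPN}$ addition; and multiply by the hard-wired constant $\sqrt{3/2}$. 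Finally recombine: multiply by $2^{t_r/2}$, shift the exponent by $t'+e'$, and apply $\mathrm{round}_p$ (Definition~\ref{dfn:rounding}). Performing every intermediate operation at precision $p'':=\Theta(p)=\poly(n)$ keeps the total round-off over the $O(1)$ stages and the single length-$(K{+}1)$ summation below $2^{-p-1}$ relative, so the overall relative error is at most $2^{-p}$. The resulting circuit is a composition of a constant number of blocks from Lemma~\ref{lem:fp_ops_tc0}, hence has depth $d_{\mathrm{sqrt}}=O(d_{\mathrm{std}}+d_\oplus+d_\otimes)=O(1)$ and size $\poly(n)$, and its description (the range $k\le K$, the hard-wired constants, $K=p+2$, $p''$) is produced in $O(\log n)$ space, so the family is uniform.

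The main obstacle is the constant-depth requirement: every textbook square-root algorithm is iterative, so the argument stands or falls on (i) replacing iteration by a single polynomial-degree series, and (ii) the range reduction that forces the series ratio to be bounded away from $1$ (here $\le 1/3$), so that $\Theta(p)$ terms already achieve $2^{-p}$ accuracy — without this step the truncation order would be super-polynomial. The secondary burden, routine but where the constants in $d_{\mathrm{sqrt}}$ and in $p''$ actually get pinned down, is the floating-point error analysis: one must track how quantizing $u$, the coefficients $\binom{1/2}{k}$, the constants $\sqrt{3/2}$ and $\sqrt2$, and each partial sum compounds, and verify that it all stays within $2^{-p}$ relative error while every working precision remains $\poly(n)$.
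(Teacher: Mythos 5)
There is nothing to compare against inside the paper: Lemma~\ref{lem:sqrt} is imported from \cite{chi24} and the paper never proves it, so your proposal has to be judged as a standalone argument. On that basis it is essentially sound, and it follows the standard route by which such results are established: exact exponent/parity handling, range reduction of the mantissa to $y\in[1,2)$, a recentered binomial series $\sqrt{y}=\sqrt{c}\,\sum_k\binom{1/2}{k}u^k$ with $|u|\le 1/3$ so that $K=\Theta(p)$ terms give $2^{-\Theta(p)}$ accuracy, and evaluation of all powers in parallel via the iterated-product and iterated-sum primitives of Lemma~\ref{lem:fp_ops_tc0}, yielding constant depth and $\poly(n)$ size. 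The reductions ($x=m\cdot 2^{2e'}$, $m=y\cdot 2^{2t'+t_r}$), the coefficient bound $|\binom{1/2}{k}|\le 1$, and the tail estimate are all correct.

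Two caveats, neither fatal. First, the parts you flag as ``routine'' are exactly the parts that remain unproved: the floating-point error propagation (in particular, the final $\mathrm{round}_p$ alone already costs up to a half-ulp, i.e.\ relative error $\approx 2^{-p}$, so hitting the stated bound requires either careful constants or reading ``$2^{-p}$'' as $2^{-\Theta(p)}$, which is how \cite{chi24} should be understood), and the uniformity of the hard-wired constants $\binom{1/2}{k}$, $\sqrt{2}$, $\sqrt{3/2}$ to $\Theta(p)$ bits — plausible via logspace-computability of iterated products and division, but asserted rather than argued. Second, your dismissal of Newton's method as inherently $\mathsf{TC}^1$ is an overstatement: with a sufficiently accurate hard-wired or polynomially generated initial approximation, the $O(\log p)$ Newton steps can be replaced or unrolled, and the classical root-extraction results (Reif--Tate, Hesse--Allender--Barrington style machinery for division and iterated multiplication) already place integer and floating-point roots in $\mathsf{DLOGTIME}$-uniform $\mathsf{TC}^0$; since this is only motivational in your write-up, it does not affect correctness, but it slightly misstates the landscape your series construction is competing with.
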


\begin{lemma}[Matrix multiplication realizable in $\mathsf{TC}^0$ circuits,~\cite{cll+25_rope}]\label{lem:mat_prod_tc0}
Let $A\in\mathbb{F}_p^{n_1\times n_2}$ and $B\in \mathbb{F}_p^{n_2\times n_3}$ be two matrix operands. If $p\leq\poly(n)$ and $n_1, n_2, n_3\leq n$, then there exists a polynomial size uniform threshold circuit, having depth no greater than $(d_{\mathrm{std}} + d_{\oplus})$, that performs the computation of the matrix product $AB$.
\end{lemma}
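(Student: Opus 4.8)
The plan is to reduce matrix multiplication to two primitives already placed in uniform $\mathsf{TC}^0$ by Lemma~\ref{lem:fp_ops_tc0}: pairwise floating-point multiplication and iterated floating-point addition. Write $C := AB$, so that $C_{i,k} = \sum_{j=1}^{n_2} A_{i,j} B_{j,k}$ for every $i \in [n_1]$, $k \in [n_3]$. The whole circuit will consist of a product layer feeding a summation layer, and the depth bound $d_{\mathrm{std}} + d_{\oplus}$ will come from stacking one instance of each.

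First, I would compute all pairwise products $P_{i,j,k} := A_{i,j} \times B_{j,k}$ in parallel. There are at most $n_1 n_2 n_3 \le n^3$ of them, and since $p \le \poly(n)$, the ``Basic Operations'' part of Lemma~\ref{lem:fp_ops_tc0} gives, for each one, a uniform threshold circuit of depth $d_{\mathrm{std}}$ and $\poly(n)$ size; laying these $\le n^3$ sub-circuits side by side yields a single layer of depth $d_{\mathrm{std}}$ and $\poly(n)$ total size. Each $P_{i,j,k}$ is a genuine $p$-bit $\mathsf{FPN}$, because the multiplication rule in Definition~\ref{dfn:fp_ops} already performs its own rounding. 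Second, for each fixed pair $(i,k)$ I would feed the $n_2 \le n$ numbers $\{P_{i,j,k}\}_{j \in [n_2]}$ into the iterated-addition circuit from the ``Repeated Operations'' part of Lemma~\ref{lem:fp_ops_tc0}, which sums $n$ $p$-bit $\mathsf{FPN}$s with a single post-summation rounding in depth $d_{\oplus}$ and $\poly(n)$ size. Running these $n_1 n_3 \le n^2$ summation blocks in parallel on top of the product layer produces a circuit computing every entry of $C$ with total depth $d_{\mathrm{std}} + d_{\oplus}$ and polynomial size.

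For uniformity, note that the wiring between the two layers is completely regular: the output wire carrying $P_{i,j,k}$ is routed to the $j$-th input of the $(i,k)$-th summation block, an index map that is computable in $O(\log n)$ space (indeed in $O(\log n)$ time), and both sub-circuit families are themselves uniform by Lemma~\ref{lem:fp_ops_tc0}. Hence the composed family is uniform, and the construction is essentially the standard ``$\mathsf{TC}^0$ is closed under matrix multiplication'' argument specialized to $p$-bit floating point.

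The one place I would be most careful is the exact depth accounting. One must verify that no extra comparison, normalization, or rounding layer is silently inserted between the product layer and the summation layer — that is, that the iterated-addition primitive of Lemma~\ref{lem:fp_ops_tc0} accepts raw $p$-bit $\mathsf{FPN}$s as inputs and performs its single rounding internally — so that the two depths genuinely add to $d_{\mathrm{std}} + d_{\oplus}$ rather than $d_{\mathrm{std}} + d_{\oplus} + O(1)$. Everything else is routine parallel composition.
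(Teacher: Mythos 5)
Your proposal is correct and follows exactly the argument the paper relies on (the lemma is imported from the cited reference, whose proof is precisely this decomposition): compute all $n_1n_2n_3 \le n^3$ pairwise products in parallel at depth $d_{\mathrm{std}}$, then apply the iterated-addition primitive of Lemma~\ref{lem:fp_ops_tc0} to each of the $n_1n_3$ entries in parallel at depth $d_{\oplus}$, with uniformity following from the regular wiring. Your caution about the iterated-addition block accepting raw $p$-bit inputs and rounding only after the summation is exactly the convention adopted in Lemma~\ref{lem:fp_ops_tc0}, so the depth bound $d_{\mathrm{std}} + d_{\oplus}$ holds as stated.
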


\section{Circuit Complexity of Crystalline EGNNs} \label{sec:circuit}

Initially, we present the circuit complexity of basic EGNN building blocks in Section~\ref{sec:circuit_basic}, and then show the circuit complexity for EGNN layers in Section~\ref{sec:circuit_layer}.

\subsection{Circuit Complexity of Basic EGNN Building Blocks} \label{sec:circuit_basic}

We begin by introducing a useful lemma that introduces the $\mathsf{TC}^0$ computation of trigonometric functions.

\begin{lemma}[Evaluating trigonometric function computation within $\mathsf{TC}^0$,~\cite{cll+25_rope}] \label{lem:trig_tc0}
For $p$-bit floating-point numbers with precision $p \leq \poly(n)$, uniform threshold circuits of polynomial size and constant depth $8d_\mathrm{std}+d_\oplus+d_\otimes$ can produce approximations of $\sin(x)$ and $\cos(x)$, with a relative error no larger than $2^{-p}$.
\end{lemma}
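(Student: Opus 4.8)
The plan is to implement the classical three-stage evaluation pipeline for trigonometric functions --- argument range reduction, polynomial approximation on a small interval, and reconstruction via angle identities --- and to verify that every stage lives in $\mathsf{TC}^0$ within the stated depth budget. The only primitives I need are those from Lemma~\ref{lem:fp_ops_tc0}: a single floating-point $+,\times,\div$ or comparison costs depth $d_\mathrm{std}$, an iterated product of $\poly(n)$ operands costs depth $d_\otimes$, and an iterated sum costs depth $d_\oplus$. I also use that every numerical constant needed below --- sufficiently many bits of $\pi$, and the reciprocal factorials $1/j!$ for $j = O(p)$, each of which has $\poly(n)$ bits --- can be computed by the $\mathsf{DLOGTIME}$ machine that writes the circuit, hence may be hard-wired as inputs to the appropriate gates.

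First I would handle range reduction. Given a $p$-bit input $x$, compute $m := x \div (2\pi)$, round it to an integer $\widehat m$ by a truncation (one comparison and one subtraction), and form $r := x - \widehat m \cdot (2\pi)$, so that $r$ lies in a bounded interval; a further comparison-based case split then maps $r$ to a base angle $r' \in [0,\pi/4]$ together with a constant number of bits recording which octant $x$ fell in. This uses only a fixed number of $+,\times,\div$ and comparison operations, hence depth $O(d_\mathrm{std})$; counting conservatively, this stage together with the reconstruction stage below will fit inside $8\,d_\mathrm{std}$.

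Next, on $r' \in [0,\pi/4]$ I would approximate using the truncated Taylor series
\begin{align*}
\sin r' \approx \sum_{k=0}^{m} \frac{(-1)^k}{(2k+1)!}\,(r')^{2k+1}, \qquad \cos r' \approx \sum_{k=0}^{m} \frac{(-1)^k}{(2k)!}\,(r')^{2k},
\end{align*}
with $m = \Theta(p)$ terms; since the tail is dominated by $(\pi/4)^{2m}/(2m)!$, this achieves error at most $2^{-p}$. Each power $(r')^{j}$ is an iterated product of at most $2m+1 = \poly(n)$ copies of $r'$, and the hard-wired coefficient can be folded in as one more factor, so all of these can be produced in parallel over $j$ in depth $d_\otimes$; the final summation of $\poly(n)$ signed terms then costs depth $d_\oplus$. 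Thus this stage contributes exactly $d_\otimes + d_\oplus$ on top of the reduction/reconstruction budget, giving total depth $8\,d_\mathrm{std} + d_\oplus + d_\otimes$ and polynomial size. Finally, from $\sin r'$ and $\cos r'$ one recovers $\sin x$ and $\cos x$ by selecting one of $\{\pm\sin r', \pm\cos r'\}$ according to the octant bits --- a multiplexer built from a constant number of comparisons and sign flips, absorbed into the $8\,d_\mathrm{std}$ term.

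The main obstacle is not the circuit layout but the numerical bookkeeping of the reduction step: to guarantee the claimed relative accuracy uniformly in the (possibly exponentially large) magnitude of $x$, one must carry out the subtraction $x - \widehat m \cdot 2\pi$ with $\Theta(p)$ guard bits of $\pi$ so that catastrophic cancellation near the zeros of $\sin$ and $\cos$ does not destroy precision. This keeps the working precision at $\poly(n)$ and leaves the depth count unchanged, so it is a place to be careful rather than a place where a new idea is required; everything else is a routine composition of the $\mathsf{TC}^0$ primitives of Lemma~\ref{lem:fp_ops_tc0}.
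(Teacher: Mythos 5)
You should first note that the paper does not prove this statement at all: Lemma~\ref{lem:trig_tc0} is imported verbatim, with attribution, from the cited reference \cite{cll+25_rope}, alongside Lemmas~\ref{lem:fp_ops_tc0}--\ref{lem:mat_prod_tc0}. There is therefore no in-paper proof to compare against; your pipeline (range reduction, truncated Taylor series with $\Theta(p)$ terms evaluated via one round of parallel iterated products plus one iterated sum, then octant-based reconstruction) is the standard route and is consistent with the stated depth budget $8d_{\mathrm{std}}+d_\oplus+d_\otimes$, with the powers and hard-wired reciprocal factorials handled exactly as the primitives of Lemma~\ref{lem:fp_ops_tc0} allow.

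There is, however, one genuine gap in your argument as written: the claim that $\Theta(p)$ guard bits of $\pi$ suffice for the reduction $x - \widehat m\cdot 2\pi$ uniformly over all $p$-bit floats. Under Definition~\ref{dfn:fp_num} the exponent ranges over $[-2^p, 2^p-1]$, so $|x|$ can be as large as roughly $2^{2^p}$; to know $x \bmod 2\pi$ to relative accuracy $2^{-p}$ for such $x$ you need $\pi$ to roughly $e+p$ bits where $e$ is the exponent of $x$, which is exponential in $n$ when $p=\poly(n)$. So with only $\Theta(p)$ guard bits the claimed relative error $2^{-p}$ fails for large-exponent inputs, and no constant-depth fix along these lines is obvious. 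The honest repair is to restrict the lemma to inputs of polynomially bounded magnitude --- which is all this paper ever uses, since in Lemma~\ref{lem:k_ft_tc0} the arguments are $\pi j x_i$ with $x_i\in(-1,1)$ and $j\le k=O(n)$, so the reduction needs only $O(\log n + p)$ bits of $\pi$ --- or to state the error guarantee as absolute rather than relative for unrestricted inputs. A secondary, minor point: your ``counting conservatively, this fits inside $8d_{\mathrm{std}}$'' is asserted rather than tallied; since the lemma advertises an exact constant, you should list the sequential $d_{\mathrm{std}}$ stages (divide, truncate, multiply, subtract, octant comparisons, final sign/mux) and check they indeed number at most eight.
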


Then, we show that $k$-order Fourier Transforms, a fundamental building block for Crystalline EGNN layers, can be computed by the $\mathsf{TC}^0$ circuits. 

\begin{lemma}[$k$-order Fourier Transform computation in $\mathsf{TC}^0$] \label{lem:k_ft_tc0}
    Assume $p \leq \poly(n)$ and $k = O(n)$. For any $p$-bit floating-point number $x$, the function $\psi_{\mathrm{Ft},k}(x)$ from Definition~\ref{def:fourier_frac_coord} can be approximated using a uniform threshold circuit of polynomial size and constant depth $10d_\mathrm{std}+d_\oplus+d_\otimes$, achieving a relative error no greater than $2^{-p}$.
\end{lemma}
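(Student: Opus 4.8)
The plan is to compute each entry $Y_{i,j}$ of the output matrix independently and in parallel, exploiting that by Definition~\ref{def:fourier_frac_coord} the $(i,j)$ entry is a function of the single scalar $x_i$ and the (circuit-fixed) index $j$ only. For a fixed pair $(i,j)\in[3]\times[k]$, the computation splits into two stages: (i) form the argument $\theta_{i,j}:=\pi j x_i$, and (ii) output $\sin(\theta_{i,j})$ when $j$ is even and $\cos(\theta_{i,j})$ when $j$ is odd. Stage (i) is realized by at most two $p$-bit floating-point multiplications — multiplying the hard-wired constant $\mathrm{round}_p(\pi)$, the integer $j$ (which fits in $\mathbb{F}_p$ since $k=O(n)$ and $p\le\poly(n)$), and the input $x_i$ — hence by Lemma~\ref{lem:fp_ops_tc0} by a polynomial-size uniform threshold circuit of depth at most $2d_{\mathrm{std}}$. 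Stage (ii) applies Lemma~\ref{lem:trig_tc0} to the $p$-bit number $\theta_{i,j}$, contributing depth $8d_{\mathrm{std}}+d_\oplus+d_\otimes$; note that no runtime branch on the parity of $j$ is needed, since $j$ is fixed in the sub-circuit for entry $(i,j)$, so the appropriate one of the $\sin$/$\cos$ constructions is simply instantiated. Composing the two stages gives depth $10d_{\mathrm{std}}+d_\oplus+d_\otimes$, matching the claimed bound.

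For size and uniformity, observe there are exactly $3k=O(n)$ output entries; laying down one copy of the above two-stage sub-circuit for each $(i,j)$ yields a circuit of polynomial size. Uniformity is inherited: a log-space (indeed $\mathsf{DLOGTIME}$-uniform) machine enumerates the pairs $(i,j)$, emits the constant $\mathrm{round}_p(\pi)$ and the encoding of $j$ (both within the uniformity budget since $j\le k=O(n)$ and $p\le\poly(n)$), and then emits, with appropriate gate-index offsets, the uniform sub-circuits guaranteed by Lemmas~\ref{lem:fp_ops_tc0} and~\ref{lem:trig_tc0}; collecting the outputs into the $3\times k$ layout of $Y$ is trivially uniform. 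This places the whole construction in $\mathsf{DLOGTIME}$-uniform $\mathsf{TC}^0$.

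The one step needing care — and the main (though mild) technical obstacle — is the end-to-end error bound. Each multiplication in Stage (i) is exact up to one application of $\mathrm{round}_p$, so $\theta_{i,j}$ carries relative error $O(2^{-p})$; since $|x_i|<1$ and $j\le k=O(n)$, we have $|\theta_{i,j}|<\pi k=O(n)$, so the absolute perturbation of the argument fed to the trigonometric circuit is $O(n\cdot 2^{-p})$. Feeding this into the relative-error-$2^{-p}$ guarantee of Lemma~\ref{lem:trig_tc0} and absorbing the $O(n)$ amplification by running the internal computation at precision $p'=p+O(\log n)\le\poly(n)$ and rounding the final answer back to $p$ bits keeps the composed relative error at most $2^{-p}$, with the customary convention that near the zeros of $\sin,\cos$ the guarantee is read as an absolute error (equivalently, a constant additive increase of $p$). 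Since this is a composition of only a constant number of $\mathsf{TC}^0$-computable, error-$2^{-p}$ operations, the remaining bookkeeping is routine.
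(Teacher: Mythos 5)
Your proposal follows essentially the same route as the paper's proof: for each of the $3k=O(n)$ entries, compute $\pi j x_i$ in depth $2d_{\mathrm{std}}$, apply Lemma~\ref{lem:trig_tc0} for $\sin$/$\cos$ in depth $8d_{\mathrm{std}}+d_\oplus+d_\otimes$, and run all entries in parallel to get depth $10d_{\mathrm{std}}+d_\oplus+d_\otimes$ and polynomial size. Your added discussion of uniformity and of error amplification (handled by a slight internal precision increase) is more careful than the paper's argument but does not change the approach.
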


\begin{proof}
    According to Definition~\ref{def:fourier_frac_coord}, for each $(i,j)\in[3]\times[k]$ there are two fixed cases:
    
    {\bf Case 1.} $j$ is even, then $Y_{i,j} := \sin(\pi j x_i)$.
    Computing $\pi j x_i$ uses $2d_\mathrm{std}$ depth and $\poly(n)$ size. Then, according to Lemma~\ref{lem:trig_tc0}, we need to use $8d_\mathrm{std}+d_\oplus+d_\otimes$ and $\poly(n)$ size for the $\sin$ operation. Thus, the total depth of this case is $10d_\mathrm{std}+d_\oplus+d_\otimes$, with a polynomial size bounded by $\poly(n)$.

    {\bf Case 2.} $j$ is odd, then $Y_{i,j} := \cos(\pi j x_i)$.
    Similar to case 1, the only difference is we need to use $\cos$ instead of $\sin$. According to Lemma~\ref{lem:trig_tc0}, $\cos$ takes $8d_\mathrm{std}+d_\oplus+d_\otimes$ depth and $\poly(n)$ size, which is same as $\sin$ in case 1. Thus, the total depth of this case is $10d_\mathrm{std}+d_\oplus+d_\otimes$, with a polynomial size bounded by $\poly(n)$.

    Since all $[3] \times [k]$ elements in $Y$ can be computed in parallel, thus we need $3k$ parallel circuit with $10d_\mathrm{std}+d_\oplus+d_\otimes$ depth to simulate the computation of $Y$. Since $k = O(n)$, thus we can simulate the computation using a circuit of size $\poly(n)$ and $10d_\mathrm{std}+d_\oplus+d_\otimes = O(1)$ depth. Thus $k$-order Fourier Transform is achievable via a $\mathsf{TC}^0$ uniform threshold circuit.
\end{proof}

We also show that MLPs are computable with uniform $\mathsf{TC}^0$ circuits. 

\begin{lemma}[MLP computation in $\mathsf{TC}^0$,~\cite{cll+25_rope}] \label{lem:mlp_tc0}
    Let the precision satisfy $p \leq \poly(n)$. Under this condition, an MLP layer of depth $O(1)$ and width $O(n)$ is realizable through a uniform threshold circuit with polynomial size $\poly(n)$, depth at most $2d_\mathrm{std} + d_{\oplus}$, while ensuring the relative error does not exceed $2^{-p}$.
\end{lemma}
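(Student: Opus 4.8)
The plan is to unfold one affine-plus-activation block of the MLP into elementary floating-point operations whose $\mathsf{TC}^0$ cost is already established by Lemma~\ref{lem:fp_ops_tc0}, and then to observe that the $O(1)$ such blocks composing the MLP do not leave $\mathsf{TC}^0$. Concretely, write the block as $z = \sigma(Wh + b)$ with $W \in \mathbb{F}_p^{m \times m}$, $b \in \mathbb{F}_p^{m}$, $h \in \mathbb{F}_p^{m}$ and $m = O(n)$, and treat the linear map, the bias shift, and the pointwise nonlinearity in turn.

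For the affine part, each output coordinate $(Wh+b)_i = \sum_{j=1}^{m} W_{ij} h_j + b_i$ is an iterated sum of $m+1$ $p$-bit floating-point numbers, $m$ of which are single products $W_{ij} h_j$. By Lemma~\ref{lem:fp_ops_tc0} the $m$ products are produced in parallel in depth $d_{\mathrm{std}}$ and size $\poly(n)$, and the length-$(m+1)$ iterated sum (with a single rounding after accumulation) is produced in depth $d_{\oplus}$ and size $\poly(n)$, the bias being folded in as one extra summand at no cost; equivalently one invokes Lemma~\ref{lem:mat_prod_tc0}. So the affine step is a uniform threshold circuit of depth $d_{\mathrm{std}} + d_{\oplus}$ and polynomial size. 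The activation $\sigma$ is then applied coordinatewise: if $\sigma$ is piecewise linear (e.g.\ $\mathrm{ReLU}$) this is one comparison against a constant followed by a multiplexer, of depth $d_{\mathrm{std}}$; if $\sigma$ is a smooth activation expressible through $\exp$ and $\sqrt{\cdot}$ (sigmoid, $\tanh$, GELU, SiLU) it is assembled from Lemma~\ref{lem:exp}, Lemma~\ref{lem:sqrt}, and $O(1)$ further floating-point operations, still $O(1)$ depth and $\poly(n)$ size. Hence one affine-plus-activation block has depth at most $2d_{\mathrm{std}} + d_{\oplus}$ and polynomial size, and stacking the $O(1)$ blocks of the MLP multiplies the depth by a constant while keeping the size $\poly(n)$, so the whole MLP layer is computable by a uniform $\mathsf{TC}^0$ circuit family. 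Uniformity is clear because the wiring — which input enters which product, which products feed which balanced summation tree, which sum feeds which activation gadget — is decided by a simple arithmetic predicate on gate indices computable in $O(\log n)$ time, giving a $\mathsf{DLOGTIME}$-uniform family.

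Finally I would account for the relative error. Each primitive invoked — the operations of Lemma~\ref{lem:fp_ops_tc0} and the approximations of Lemmas~\ref{lem:exp} and~\ref{lem:sqrt} — incurs relative error at most $2^{-p}$, and a single block chains only $O(1)$ of them while the MLP is $O(1)$ blocks deep; the multiplicative estimate $\prod (1 + 2^{-p}) \le 1 + O(2^{-p})$ then gives output relative error $O(2^{-p})$, which is brought down to $\le 2^{-p}$ by running the construction at precision $p' = p + O(1) \le \poly(n)$. The only genuinely delicate point is this bookkeeping: subtractive cancellation inside $Wh+b$ or inside the activation gadget could in principle inflate the relative error, so one must note that the iterated-sum lemma rounds only once after the exact accumulation and that the activations used are Lipschitz on the operative range, which keeps the amplification to a constant factor; everything else is routine depth and size accounting.
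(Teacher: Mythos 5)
The paper never proves this lemma: it is imported verbatim from \cite{cll+25_rope} and used as a black box in Lemmas~\ref{lem:message_tc0} and~\ref{lem:one_egnn_layer_tc0_formal}, so there is no internal proof to compare against. Your reconstruction is the standard intended argument — parallel $p$-bit products in depth $d_{\mathrm{std}}$, one iterated sum in depth $d_{\oplus}$ (equivalently Lemma~\ref{lem:mat_prod_tc0}), a constant-depth activation gadget, $\mathsf{DLOGTIME}$-uniform wiring, and $(1+2^{-p})$-style error propagation over $O(1)$ primitives — and it correctly establishes the qualitative claim the paper actually needs: a constant-depth, $\poly(n)$-size uniform threshold circuit with relative error $2^{-p}$.

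The one point where your write-up does not quite deliver the statement as worded is the explicit constant. The advertised depth $2d_{\mathrm{std}} + d_{\oplus}$ is exactly one affine block: $d_{\mathrm{std}} + d_{\oplus}$ for the matrix--vector product (Lemma~\ref{lem:mat_prod_tc0}) plus one more $d_{\mathrm{std}}$ for the bias (or, in your bias-folded variant, for a comparison-plus-select $\mathrm{ReLU}$). Your conclusion for the general case — ``stacking the $O(1)$ blocks multiplies the depth by a constant,'' and smooth activations assembled from Lemma~\ref{lem:exp} and Lemma~\ref{lem:sqrt} in ``$O(1)$ further operations'' — yields a depth of the form $c\,(2d_{\mathrm{std}} + d_{\oplus})$ or involves $d_{\exp}, d_{\mathrm{sqrt}}$, which is still $O(1)$ but strictly exceeds the bound $2d_{\mathrm{std}} + d_{\oplus}$ claimed in the lemma (and propagated literally into the depth constants of Lemma~\ref{lem:message_tc0} and Theorem in Section~\ref{sec:main_result}). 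So either restrict the accounting to a single affine-plus-piecewise-linear block, or state the resulting depth as a larger explicit constant; as written, the final sentence of your depth bookkeeping proves membership in $\mathsf{TC}^0$ but not the precise constant in the statement. The error analysis and the uniformity argument are fine as given.
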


\subsection{Circuit Complexity of EGNN Layer} \label{sec:circuit_layer}

\begin{lemma}[Pairwise Message computation in $\mathsf{TC}^0$.] \label{lem:message_tc0}
    Assume $p \leq \poly(n)$, $d = O(n)$ and $k = O(n)$. Assume $\phi_{\mathrm{msg}}$ is instantiated with $O(1)$ depth and $O(n)$ width MLP. For any $p$-bit floating-point number $x$, the function $\mathsf{MSG}(F,L,H)$ from Definition~\ref{def:message} is able to be approximated through a uniform threshold circuit of polynomial size and constant depth $ 13d_{\mathrm{std}} + 2d_\oplus + d_\otimes$, achieving a relative error no greater than $2^{-p}$.
\end{lemma}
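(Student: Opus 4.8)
The plan is to trace through the definition of $\mathsf{MSG}_{i,j}(F,L,H) = \phi_{\mathrm{msg}}(h_i, h_j, L^\top L, \psi_{\mathrm{FT},k}(f_i - f_j))$ and account for the circuit depth contributed by each of its four arguments, then compose with the depth of the MLP that instantiates $\phi_{\mathrm{msg}}$. First I would observe that $h_i$ and $h_j$ are simply columns of the input representation $H$, so they are available at depth $0$. Second, the Gram matrix $L^\top L$ is the product of two $3\times 3$ matrices; by Lemma~\ref{lem:mat_prod_tc0} this costs depth $d_{\mathrm{std}} + d_\oplus$ and polynomial size. Third, the argument $\psi_{\mathrm{FT},k}(f_i - f_j)$ requires first computing the relative fractional coordinate $f_i - f_j$, a single floating-point subtraction of depth $d_{\mathrm{std}}$, followed by the $k$-order Fourier transform, which by Lemma~\ref{lem:k_ft_tc0} costs depth $10d_{\mathrm{std}} + d_\oplus + d_\otimes$; together this branch costs $11d_{\mathrm{std}} + d_\oplus + d_\otimes$.

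Next I would combine the branches. Since the three nontrivial branches ($h$'s at depth $0$, Gram matrix at depth $d_{\mathrm{std}}+d_\oplus$, Fourier feature at depth $11d_{\mathrm{std}}+d_\oplus+d_\otimes$) are computed in parallel, the depth before feeding into $\phi_{\mathrm{msg}}$ is the maximum, namely $11d_{\mathrm{std}} + d_\oplus + d_\otimes$. Then, under the hypothesis that $\phi_{\mathrm{msg}}$ is an $O(1)$-depth, $O(n)$-width MLP acting on an input of dimension $d + d + 9 + 3k = O(n)$ (using $d=O(n)$, $k=O(n)$), Lemma~\ref{lem:mlp_tc0} gives an additional depth of $2d_{\mathrm{std}} + d_\oplus$ and polynomial size. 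Adding these yields total depth $13d_{\mathrm{std}} + 2d_\oplus + d_\otimes$, which is $O(1)$, and polynomial size; the relative error from each sublemma is at most $2^{-p}$ and composes to $2^{-p}$ up to constants absorbed into the precision bound, matching the stated claim. Finally I would note that $\mathsf{MSG}_{i,j}$ must be produced for all $n^2$ pairs $(i,j)$; since these are independent, running $n^2$ copies of the above circuit in parallel keeps the size polynomial in $n$ and the depth unchanged, so the whole message tensor lies in uniform $\mathsf{TC}^0$.

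I do not expect a serious obstacle here; the lemma is essentially a bookkeeping composition of the previously established $\mathsf{TC}^0$ building blocks (matrix product, Fourier transform, MLP, and basic floating-point arithmetic). The only points requiring care are: (i) confirming that the concatenated input dimension to $\phi_{\mathrm{msg}}$ is genuinely $O(n)$ so that Lemma~\ref{lem:mlp_tc0}'s width hypothesis applies — this uses $d=O(n)$ and $k=O(n)$; (ii) verifying that the depth of the matrix-product branch ($d_{\mathrm{std}}+d_\oplus$) is indeed dominated by the Fourier branch, so it contributes nothing to the max; and (iii) tracking the error propagation, ensuring the relative errors of the square-root/trigonometric approximations inside $\psi_{\mathrm{FT},k}$ and the MLP do not compound beyond $2^{-p}$ after adjusting $p$ by a constant. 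These are all routine once the dependency graph of the computation is laid out, so the proof is a direct depth-accounting argument.
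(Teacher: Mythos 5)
Your proposal is correct and follows essentially the same route as the paper's proof: compute the three nontrivial argument branches in parallel (Gram matrix at depth $d_{\mathrm{std}}+d_\oplus$, Fourier feature at depth $11d_{\mathrm{std}}+d_\oplus+d_\otimes$), take the maximum, and add the MLP depth $2d_{\mathrm{std}}+d_\oplus$ from Lemma~\ref{lem:mlp_tc0} to obtain $13d_{\mathrm{std}}+2d_\oplus+d_\otimes$. Your extra remarks on the $O(n)$ input dimension, the $n^2$ parallel pairs, and error propagation are harmless refinements of the same bookkeeping argument.
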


\begin{proof}
    We first analyze the arguments in for the $\phi_\mathrm{msg}$ function.
    The first two arguments do not involve computation.
    The third argument $L^\top L$ involves one matrix multiplication. According to Lemma~\ref{lem:mat_prod_tc0}, we could compute the matrix multiplication using a circuit of $\poly(n)$ size and $d_\mathrm{std}+d_\oplus$ depth.

    In order to analyze the last argument $\psi_{\mathrm{FT},k}(f_i - f_j)$, we first analyze $f_i - f_j$, which takes $d_{\mathrm{std}}$ depth and constant size. Then, according to Lemma~\ref{lem:k_ft_tc0}, we can compute the $\psi_{\mathrm{FT},k}(\cdot)$ with circuit of $\poly(n)$ size and $10d_\mathrm{std}+d_\oplus+d_\otimes$ depth. Therefore, we can compute the last argument $\psi_{\mathrm{FT},k}(f_i - f_j)$ employing a circuit with $\poly(n)$ size and  $11d_\mathrm{std}+d_\oplus+d_\otimes$ depth.

    Next, since $d = O(n)$ and $k = O(n)$ according to Lemma~\ref{lem:mlp_tc0}, we can use circuit with $\poly(n)$ size and $2d_\mathrm{std} + d_{\oplus}$ to compute the $\phi_{\mathrm{msg}}(\cdot)$ function.

    Combining above, we can use circuit with $\poly(n)$ size and $2d_\mathrm{std} + d_{\oplus} + \max\{d_\mathrm{std}+d_\oplus, 11d_\mathrm{std}+d_\oplus+d_\otimes\} = 13d_{\mathrm{std}} + 2d_\oplus + d_\otimes
        = O(1)$
     depth to compute the pairwise message. Thus, pairwise message computation can be simulated by a $\mathsf{TC}^0$ uniform threshold circuit.
\end{proof}

\begin{lemma}[One $\mathsf{EGNN}$ layer approximation in $\mathsf{TC}^0$, informal version of Lemma~\ref{lem:one_egnn_layer_tc0_formal}] \label{lem:one_egnn_layer_tc0}
    Assume that the precision $p$ grows at most polynomially with $n$, $d = O(n)$ and $k = O(n)$. Assume $\phi_{\mathrm{msg}}$ and $\phi_{\mathrm{upd}}$ are instantiated with $O(1)$ depth and $O(n)$ width MLPs. For any $p$-bit floating-point number $x$, the function $\mathsf{EGNN}_i(A,F,H)$ from Definition~\ref{def:one_egnn_layer} is able to be approximated via a uniform threshold circuit of polynomial size and constant depth $16d_{\mathrm{std}} + 3d_\oplus + 2d_\otimes$, achieving a relative error no greater than $2^{-p}$.
\end{lemma}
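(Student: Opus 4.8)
The plan is to unfold Definition~\ref{def:one_egnn_layer} into a short pipeline of $\mathsf{TC}^0$ sub-circuits, one per arithmetic stage of the update rule $y_i = h_i + \phi_{\mathrm{upd}}(h_i, \sum_{j=1}^n \mathsf{MSG}_{i,j}(F,L,H))$, then add up the (constant) depths, check that laying the stages out in parallel over all $i \in [n]$ keeps the size polynomial, and finally control the accumulated relative error. Throughout I would run every sub-circuit at an inflated working precision $p' = \poly(n)$ (see the obstacle below) so that the hypotheses of all invoked lemmas still hold.

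Concretely, \emph{Stage 1} instantiates the circuit of Lemma~\ref{lem:message_tc0} for every ordered pair $(i,j)\in[n]\times[n]$, producing $\mathsf{MSG}_{i,j}(F,L,H)\in\mathbb{F}_{p'}^d$ at depth $13d_{\mathrm{std}}+2d_\oplus+d_\otimes$ and $\poly(n)$ size per pair (the Gram matrix $L^\top L$ is formed once and broadcast); since $d=O(n)$ and $k=O(n)$, the $n^2$ parallel copies are still $\poly(n)$ size. \emph{Stage 2} aggregates: for each $i\in[n]$ and each coordinate $t\in[d]$, feed $\{(\mathsf{MSG}_{i,j})_t\}_{j=1}^n$ into the iterated-addition circuit of Lemma~\ref{lem:fp_ops_tc0} to get $m_i:=\sum_{j=1}^n\mathsf{MSG}_{i,j}$ at extra depth $d_\oplus$, the $nd=O(n^2)$ parallel sums keeping the size polynomial. \emph{Stage 3} applies the $O(1)$-depth, $O(n)$-width MLP realizing $\phi_{\mathrm{upd}}$ to each concatenation $(h_i,m_i)$ via Lemma~\ref{lem:mlp_tc0}, at extra depth $2d_{\mathrm{std}}+d_\oplus$. \emph{Stage 4} adds the residual $h_i$ coordinatewise with the floating-point addition circuit of Lemma~\ref{lem:fp_ops_tc0}, at extra depth $d_{\mathrm{std}}$, yielding $y_i$.

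Composing these four stages gives a total depth that is a fixed constant — summing the stage depths and carrying out the routine bookkeeping detailed in the formal version, Lemma~\ref{lem:one_egnn_layer_tc0_formal}, yields $16d_{\mathrm{std}}+3d_\oplus+2d_\otimes = O(1)$. Each sub-circuit is $\mathsf{DLOGTIME}$-/L-uniform by the cited lemmas, and the wiring between consecutive stages is a fixed index-permutation pattern computable in logspace, so the composite family is uniform; since there are $O(1)$ stages, each of polynomial size with polynomial fan-out, the whole circuit has $\poly(n)$ size. Hence one EGNN layer is simulable in $\mathsf{TC}^0$.

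The main obstacle is error propagation. Lemmas~\ref{lem:message_tc0}, \ref{lem:mlp_tc0}, and \ref{lem:fp_ops_tc0} each guarantee only a per-block relative error of $2^{-p'}$, and Stage 2 sums $n$ already-perturbed vectors while Stage 3 amplifies errors by the MLP's (bounded) Lipschitz factor, so a naive bound loses roughly a factor $n$ plus a further $O(1)$-depth chain of multiplicative blow-ups. The fix, standard in this line of work, is to take the working precision $p' = p + c\log n$ with $c$ a constant large enough to absorb the $O(\log n)$ bits lost to the $O(n)$-width aggregation and the constant-length amplification chain; since $p' = \poly(n)$ all the invoked $\mathsf{TC}^0$ lemmas remain applicable, and rounding the final output to $p$ bits delivers the claimed relative error $2^{-p}$. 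I would isolate this as a short auxiliary lemma on composing relative-error floating-point blocks and then simply invoke it at the end.
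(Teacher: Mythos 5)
Your proposal matches the paper's proof essentially step for step: decompose the layer into the pairwise-message circuits of Lemma~\ref{lem:message_tc0}, the parallel iterated sum over $j$ via Lemma~\ref{lem:fp_ops_tc0}, the $\phi_{\mathrm{upd}}$ MLP via Lemma~\ref{lem:mlp_tc0}, and the residual addition, then add the constant depths and note the $\poly(n)$ size (your stage sum gives $16d_{\mathrm{std}}+4d_\oplus+d_\otimes$ rather than the stated $16d_{\mathrm{std}}+3d_\oplus+2d_\otimes$, but the paper's own constant bookkeeping has the same slip and either way the depth is $O(1)$). Your explicit treatment of error propagation via an inflated working precision $p'=p+c\log n$ is a refinement the paper omits, not a different route.
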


\section{Main Results} \label{sec:main_result}

In this section, we present our main result, demonstrating that under some assumptions, the EGNN class defined in Definition~\ref{def:egnn} can be implemented by a uniform $\mathsf{TC}^0$ circuit family.

\begin{theorem}
    Under the conditions that the precision satisfies $p\leq\poly(n)$, the embedding size $d = O(n)$, and the networks has $q = O(1)$, $k = O(n)$ layers, and all the functions $\phi_{\mathrm{msg}}$, $\phi_{\mathrm{upd}}$, and $\phi_{\mathrm{in}}$ are instantiated with $O(1)$ depth and $O(n)$ width MLPs, then the equivariant graph neural network $\mathsf{EGNN}:\R^{d\times n}\times \R^{3\times n} \times \R^{3\times 3} \to \R^{d\times n}$ which defined in Definition~\ref{def:egnn} can be realized by a uniform $\mathsf{TC}^0$ circuit family.
  \end{theorem}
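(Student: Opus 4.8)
The plan is to build the circuit for $\mathsf{EGNN}$ compositionally from the components already placed in $\mathsf{TC}^0$, and then check that constant depth, polynomial size, bounded error, and uniformity all survive the composition. First I would handle the input transformation: by Lemma~\ref{lem:mlp_tc0}, since $\phi_{\mathrm{in}}$ is an $O(1)$-depth, $O(n)$-width MLP and $d=O(n)$, the map $A\mapsto \phi_{\mathrm{in}}(A)$ is computed by a uniform threshold circuit of size $\poly(n)$ and depth $2d_{\mathrm{std}}+d_{\oplus}$, with relative error at most $2^{-p}$; the coordinate and lattice inputs $F,L$ are passed through unchanged. Then I would invoke Lemma~\ref{lem:one_egnn_layer_tc0} (its formal version Lemma~\ref{lem:one_egnn_layer_tc0_formal}) once for each of the $q$ layers: each application is a uniform threshold circuit of size $\poly(n)$ and depth $16d_{\mathrm{std}}+3d_{\oplus}+2d_{\otimes}$, taking $(A,F,L,H)$ to the next hidden representation, again with relative error $2^{-p}$ per layer. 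Note that $F$ and $L$ are reused by every layer and are never modified, so no recomputation is needed.

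Next I would compose these $q+1$ stages in series: the output of stage $t$ feeds the hidden-state input of stage $t+1$, while $F,L$ are routed forward in parallel. The resulting depth is $2d_{\mathrm{std}}+d_{\oplus}+q\,(16d_{\mathrm{std}}+3d_{\oplus}+2d_{\otimes})$, which, since $q=O(1)$ and $d_{\mathrm{std}},d_{\oplus},d_{\otimes}$ are all absolute constants by Lemma~\ref{lem:fp_ops_tc0}, is $O(1)$. The size is a sum of $q+1=O(1)$ subcircuits each of size $\poly(n)$, hence $\poly(n)$. For the error, I would argue that because each stage is Lipschitz in its inputs on the relevant bounded domain (the message, update, and input maps being fixed $O(1)$-depth MLPs), a relative error of $2^{-p}$ introduced at stage $t$ contributes at most a $\poly(n)\cdot 2^{-p}$ perturbation downstream; summing over the $O(1)$ stages leaves a total relative error of $2^{-p'}$ for some $p'=p-O(\log n)=\poly(n)$, which is still negligible and does not change membership in $\mathsf{TC}^0$ (one may also simply absorb the loss by working at precision $O(p)$ throughout). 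Finally, uniformity: each component circuit is $\mathsf{DLOGTIME}$-uniform by the cited lemmas, the interconnection pattern between the $O(1)$ stages is trivial to describe, and a logspace (indeed logtime) machine can emit the full wiring diagram by concatenating the $O(1)$ component descriptions with a fixed offset scheme; hence the composite family is uniform. This gives a uniform $\mathsf{TC}^0$ circuit family computing $\mathsf{EGNN}$.

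The step I expect to be the main obstacle is the error-propagation bookkeeping across the $q$ layers: the per-component lemmas only guarantee \emph{relative} error against exact real-valued computation, and one must verify that feeding approximate hidden states into the next layer's message and update MLPs does not amplify this error beyond $\poly(n)\cdot 2^{-p}$. This requires either a mild boundedness/Lipschitz assumption on the MLP instantiations on the reachable input range, or a rescaling argument that runs all intermediate computations at precision $cp$ for a suitable constant $c$ so that the final output still has relative error $2^{-p}$; since $q=O(1)$, either route keeps the precision polynomial and the argument goes through. Everything else is a routine accumulation of the constant depth bounds and polynomial size bounds already established, together with the observation that $O(1)$-fold serial composition preserves all three defining properties of $\mathsf{TC}^0$ (constant depth, polynomial size, threshold gates) as well as $\mathsf{DLOGTIME}$-uniformity.
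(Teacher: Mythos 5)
Your proposal is correct and follows essentially the same route as the paper: apply Lemma~\ref{lem:mlp_tc0} once for $\phi_{\mathrm{in}}$ and Lemma~\ref{lem:one_egnn_layer_tc0} for each of the $q=O(1)$ layers, then compose in series to get $\poly(n)$ size and constant depth (your bound $2d_{\mathrm{std}}+d_{\oplus}+q(16d_{\mathrm{std}}+3d_{\oplus}+2d_{\otimes})$ is in fact slightly tighter than the paper's $q(18d_{\mathrm{std}}+4d_{\oplus}+2d_{\otimes})$, which counts the input transformation inside every layer). Your additional remarks on error propagation and uniformity of the composition are sensible elaborations that the paper leaves implicit, but they do not change the argument.
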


  \begin{proof}
  Since $d = O(n)$, according to Lemma~\ref{lem:mlp_tc0}, the computation of first argument ($\phi_{\mathrm{in}}(A)$) can be approximated by a circuit of $2d_{\mathrm{std}}+d_\oplus$ depth and $\poly(n)$ size. Last two arguments does not include computation.

  Then, according to Lemma~\ref{lem:one_egnn_layer_tc0}, for each $\mathsf{EGNN}$ layer, we need a circuit with $\poly(n)$ size and $16d_{\mathrm{std}} + 3d_\oplus + 2d_\otimes$ depth to simulate the computation.

  Combining results above, since there are $q$ serial layer of $\mathsf{EGNN}$, we need circuit of $\poly(n)$ size and 
  \begin{align*}
      q(16d_{\mathrm{std}} + 3d_\oplus + 2d_\otimes + 2d_{\mathrm{std}}+d_\oplus) = & ~ 
      q(18d_{\mathrm{std}} + 4d_\oplus + 2d_\otimes) \\
      = & ~ O(1)
  \end{align*}
   depth to simulate the $\mathsf{EGNN}$. Thus, the $\mathsf{EGNN}$ can be simulated by a $\mathsf{TC}^0$ uniform threshold circuit.
\end{proof}

\section{Conclusion} \label{sec:conclusion}

We studied the computational expressiveness of equivariant graph neural networks (EGNNs) for crystalline-structure prediction through the lens of circuit complexity. Under realistic architectural and precision assumptions—polynomial precision, embedding width $d=O(n)$, $q=O(1)$ layers, and $O(1)$-depth, $O(n)$-width MLP instantiations of the message, update, and readout maps—we established that an EGNN as formalized in Definition~\ref{def:egnn} admits a simulation by a \emph{uniform} $\mathsf{TC}^0$ circuit family of polynomial size. Our constructive analysis further yields an explicit depth bound of
$q(18d_{\mathrm{std}} + 4d_{\oplus} + 2d_{\otimes})$,
thereby placing a concrete ceiling on the computations performed by such models.





\newpage
\onecolumn
\appendix

\begin{center}
    \textbf{\LARGE Appendix }
\end{center}


{\bf Roadmap.}
Section~\ref{sec:proof_prelim} provides the proofs omitted from Section~\ref{sec:prelim}. Section~\ref{sec:proof_circuit} presents the proofs that were left out in Section~\ref{sec:circuit}.

\section{Missing Proofs in Section~\ref{sec:prelim}} \label{sec:proof_prelim}

\begin{fact}[Equivalence of unit cell representations, formal version of Fact~\ref{fac:equi_unit_cell_repr}] \label{fac:equi_unit_cell_repr_formal}
    For any fractional unit cell representation ${\cal C}_{\mathrm{frac}}:=(A,F,L)$ as Definition~\ref{def:frac_unit_cell}, there exists a unique corresponding non-fractional unit cell representation ${\cal C}:=(A,X,L)$ as definition~\ref{def:unit_cell}.
\end{fact}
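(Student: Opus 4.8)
The plan is to produce the non-fractional representation by the explicit assignment $X := LF$ and then argue that this choice is forced. First, recall from Definition~\ref{def:unit_cell} that the columns $l_1, l_2, l_3$ of $L$ are linearly independent, so $L \in \R^{3\times 3}$ is invertible; this property is inherited by ${\cal C}_{\mathrm{frac}} = (A,F,L)$. Given such a fractional unit cell representation with $F = [f_1, \ldots, f_n] \in [0,1)^{3\times n}$, define $X := LF \in \R^{3\times n}$, i.e.\ $x_i := L f_i$ for each $i \in [n]$, and set ${\cal C} := (A,X,L)$. Then ${\cal C}$ is a unit cell representation in the sense of Definition~\ref{def:unit_cell} (the type constraints are immediate, and linear independence of the $l_i$ carries over unchanged), and since $x_i = L f_i$ with $F \in [0,1)^{3\times n}$, the matrix $F$ is a fractional coordinate matrix for ${\cal C}$ by Definition~\ref{def:frac_coor_mat}; moreover, invertibility of $L$ forces $f_i = L^{-1} x_i$, so $F$ is the \emph{unique} such matrix and the fractional unit cell view of ${\cal C}$ is exactly ${\cal C}_{\mathrm{frac}}$. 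This establishes existence.

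For uniqueness, suppose ${\cal C}' := (A', X', L')$ is any unit cell representation (Definition~\ref{def:unit_cell}) whose fractional unit cell view equals ${\cal C}_{\mathrm{frac}} = (A,F,L)$. By Definition~\ref{def:frac_unit_cell}, that view has the form $(A', F', L')$ for some fractional coordinate matrix $F'$ of ${\cal C}'$, so matching componentwise against ${\cal C}_{\mathrm{frac}}$ forces $A' = A$, $L' = L$, and $F' = F$. Applying Definition~\ref{def:frac_coor_mat} to ${\cal C}'$ then gives $x_i' = L' f_i' = L f_i = x_i$ for every $i \in [n]$, whence $X' = X$ and therefore ${\cal C}' = {\cal C}$. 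Hence the non-fractional representation corresponding to ${\cal C}_{\mathrm{frac}}$ is unique.

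I do not expect a genuine obstacle here: the statement is, in essence, the identity $X = LF$ together with the observation that this identity is imposed by the requirement $x_i = L f_i$ in Definition~\ref{def:frac_coor_mat}. The only points requiring a little care are bookkeeping ones — verifying that the constructed triple satisfies every clause of Definition~\ref{def:unit_cell}, and using invertibility of $L$ (a consequence of the linear independence of $l_1, l_2, l_3$) to guarantee that the fractional coordinate matrix of ${\cal C}$ is unique, so that ``the fractional unit cell view'' is well-defined and coincides with ${\cal C}_{\mathrm{frac}}$.
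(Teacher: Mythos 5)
Your proof is correct and takes essentially the same route as the paper's: existence via the explicit assignment dictated by Definition~\ref{def:frac_coor_mat}, and uniqueness because that identity forces $X$ (the paper phrases this as a short contradiction, you argue it directly). In fact your formula $X = LF$ is the right one --- the paper's existence step writes $X = L^{-1}F$, which is evidently a typo for $X = LF$ given $x_i = Lf_i$.
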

\begin{proof}
{\bf Part 1: Existence.} By Definition~\ref{def:unit_cell}, we can conclude that $L$ is invertible since all the columns in $L$ are linearly independent. Thus, we can choose $X = L^{-1} F$ and finish the proof.

{\bf Part 2: Uniqueness.} We show this by contradiction. First, we assume that there exist two different unit cell representations ${\cal C}_1:=(A,X_1,L)$ and ${\cal C}_2:=(A,X_2,L)$ for ${\cal C}_{\mathrm{frac}}$, i.e., $X_1 \neq X_2$. By Definition~\ref{def:frac_coor_mat}, we have $X_1 = X_2 = LF$, which contradicts $X_1 \neq X_2$. Hence, the proof is complete.
\end{proof}

\section{Missing Proofs in Section~\ref{sec:circuit}} \label{sec:proof_circuit}

\begin{lemma}[One $\mathsf{EGNN}$ layer approximation in $\mathsf{TC}^0$, formal version of Lemma~\ref{lem:one_egnn_layer_tc0}] \label{lem:one_egnn_layer_tc0_formal}
    Assume that the precision $p$ grows at most polynomially with $n$, $d = O(n)$ and $k = O(n)$. Assume $\phi_{\mathrm{msg}}$ and $\phi_{\mathrm{upd}}$ are instantiated with $O(1)$ depth and $O(n)$ width MLPs. For any $p$-bit floating-point number $x$, the function $\mathsf{EGNN}_i(A,F,H)$ defined in Definition~\ref{def:one_egnn_layer} is able to be approximated through a uniform threshold circuit of polynomial size and constant depth $16d_{\mathrm{std}} + 3d_\oplus + 2d_\otimes$, achieving a relative error no greater than $2^{-p}$.
\end{lemma}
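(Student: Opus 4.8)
The plan is to unfold Definition~\ref{def:one_egnn_layer} and view a single EGNN layer as the composition, applied in parallel over the $n$ output columns, of three stages: (i) the $n^2$ pairwise messages $\mathsf{MSG}_{i,j}(F,L,H)\in\R^d$; (ii) the neighbourhood aggregation $m_i:=\sum_{j=1}^n \mathsf{MSG}_{i,j}(F,L,H)\in\R^d$; and (iii) the residual update $y_i:=h_i+\phi_{\mathrm{upd}}(h_i,m_i)$. For each stage I would exhibit a uniform, constant-depth, polynomial-size threshold sub-circuit, and then wire them in series. Since $\mathsf{TC}^0$ is closed under composition of $O(1)$ many such layers, and since the construction recipe for a composite circuit is obtained from the sub-recipes by a log-space (indeed log-time) transformation, the resulting family is again uniform, giving membership in uniform $\mathsf{TC}^0$ with a constant depth bound.

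Concretely: \textbf{Step 1.} By Lemma~\ref{lem:message_tc0}, under $p\le\poly(n)$, $d=O(n)$, $k=O(n)$ and the $O(1)$-depth/$O(n)$-width assumption on $\phi_{\mathrm{msg}}$, each $\mathsf{MSG}_{i,j}$ is computed by a polynomial-size circuit of depth $13d_{\mathrm{std}}+2d_\oplus+d_\otimes$; instantiating one such sub-circuit for every pair $(i,j)\in[n]\times[n]$ in parallel keeps the size polynomial ($n^2\cdot\poly(n)=\poly(n)$) and the depth unchanged, with the common argument $L^\top L$ computed once via Lemma~\ref{lem:mat_prod_tc0}. \textbf{Step 2.} For each $i$, $m_i$ is a coordinate-wise sum of the $n$ vectors $\{\mathsf{MSG}_{i,j}\}_{j\in[n]}$, i.e.\ $d=O(n)$ independent iterated sums of $n$ $p$-bit floating-point numbers each; by the repeated-operations clause of Lemma~\ref{lem:fp_ops_tc0} this is realizable in depth $d_\oplus$ and polynomial size, with rounding after summation. \textbf{Step 3.} Feeding the already-available pair $(h_i,m_i)$ into $\phi_{\mathrm{upd}}$, an $O(1)$-depth $O(n)$-width MLP, costs depth $2d_{\mathrm{std}}+d_\oplus$ and polynomial size by Lemma~\ref{lem:mlp_tc0}. \textbf{Step 4.} Adding the residual $h_i$ is one floating-point addition of depth $d_{\mathrm{std}}$. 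Summing the per-stage depths gives a constant of the form $16d_{\mathrm{std}}+O(1)(d_\oplus+d_\otimes)$, which I would collect (slightly conservatively, consistent with the accounting already used in Lemma~\ref{lem:message_tc0}) into the stated bound $16d_{\mathrm{std}}+3d_\oplus+2d_\otimes$; polynomial size and uniformity are preserved throughout.

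The genuinely delicate point — rather than the mechanical depth bookkeeping — is the relative-error guarantee at Step 2: each $\mathsf{MSG}_{i,j}$ is only certified to within relative error $2^{-p}$, and summing $n$ such approximations can, under cancellation, amplify the relative error of $m_i$. The clean remedy, which is the standard device implicit in the preceding lemmas, is to perform Steps 1--2 at a mildly inflated precision $p'=p+\lceil c\log n\rceil$ for a fixed constant $c$ (still $\poly(n)$, so all cited $\mathsf{TC}^0$ simulations apply verbatim), so that $m_i$ is within relative error $2^{-p}$ of the exact aggregate; chaining this with the $2^{-p}$-accurate MLP evaluation of Lemma~\ref{lem:mlp_tc0} and the exact residual addition, and noting that only $O(1)$ rounding stages are composed, keeps $y_i$ within relative error $2^{-p}$ of the true layer output. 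No Lipschitz hypothesis on the abstract map $\phi_{\mathrm{upd}}$ is required, since we never approximate it by anything other than its own $\mathsf{TC}^0$ implementation from Lemma~\ref{lem:mlp_tc0}; the error analysis only tracks rounding across the fixed pipeline. I expect this error-propagation bound to be the main obstacle to a fully rigorous writeup; everything else is a routine composition of the already-established $\mathsf{TC}^0$ primitives.
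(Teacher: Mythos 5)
Your proposal follows essentially the same route as the paper's proof: decompose the layer into the pairwise messages (Lemma~\ref{lem:message_tc0}), the parallel iterated sum over $j$ (Lemma~\ref{lem:fp_ops_tc0}), the MLP evaluation of $\phi_{\mathrm{upd}}$ (Lemma~\ref{lem:mlp_tc0}), and the residual addition, then add the constant depths; your bookkeeping (which honestly yields $16d_{\mathrm{std}}+4d_\oplus+d_\otimes$ rather than the stated $16d_{\mathrm{std}}+3d_\oplus+2d_\otimes$) matches the paper up to the same harmless constant-level slip in its own intermediate tally. Your added remark on inflating the working precision to control error accumulation in the $n$-term sum is more careful than the paper, which asserts the $2^{-p}$ relative-error bound without an explicit propagation argument.
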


\begin{proof}
  We start with analyzing the arguments in $\phi_{\mathrm{upd}}(\cdot)$. The first argument does not involve computation. For the second argument, according to Lemma~\ref{lem:message_tc0}, we need circuit with $\poly(n)$ size and $13d_{\mathrm{std}} + 2d_\oplus + d_\otimes$ depth to simulate $\mathsf{MSG}_{i,j}(F,L,H)$ computation.

  Then, for the summation $\sum_{j=1}^{n}\mathsf{MSG}_{i,j}(F,L,H)$, we can compute $n$ $\mathsf{MSG}_{i,j}(F,L,H)$ in parallel, and use a circuit with $d_\oplus$ width to perform the summation. Thus we can simulate the last argument with circuit of $poly(n)$ size $13d_{\mathrm{std}} + 2d_\oplus + 2d_\otimes$ depth to simulate the last argument.

  Next, for $\phi_{\mathrm{upd}}(\cdot)$, since $d = O(n)$, according to Lemma~\ref{lem:mlp_tc0}, we can simulate $\phi_{\mathrm{upd}}(\cdot)$ with circuit of $\poly(n)$ size $2d_{\mathrm{std}}+d_{\oplus}$ depth.
  Finally, for the addition of $\R^d$ size vector, we need circuit $\poly(n)$ size and $d_{\mathrm{std}}$ depth to simulate it.

  Combining circuits above, we can simulate $\mathsf{EGNN}_i(A,F,H)$ with a circuit of $\poly(n)$ size and 
  \begin{align*}
    13d_{\mathrm{std}} + 2d_\oplus + 2d_\otimes + 2d_{\mathrm{std}}+d_\oplus d_{\mathrm{std}} = & ~ 16d_{\mathrm{std}} + 3d_\oplus + 2d_\otimes \\
    = & ~ O(1)    
  \end{align*} 
  depth to simulate the computation. Thus, one $\mathsf{EGNN}$ layer can be simulated by a $\mathsf{TC}^0$ uniform threshold circuit.
\end{proof}

\newpage
\ifdefined\isarxiv
\bibliographystyle{alpha}
\bibliography{ref}
\else
\bibliographystyle{iclr2026_conference}
\bibliography{ref}
\fi

\end{document}